%% file: uai2026paper.tex
\documentclass[accepted]{uai2026} 

\usepackage[american]{babel}

\usepackage{natbib} 
\usepackage{mathtools} 
\mathtoolsset{showonlyrefs}
\usepackage{booktabs} 
\usepackage{tikz} 
\usepackage{graphicx}
\usepackage{amsmath,amssymb,amsthm}
\usepackage{enumitem}
\newtheorem{thm}{Theorem}
\newtheorem{asmp}{Assumption}

\usepackage{xcolor}
\usepackage{here} 
\newtheorem{definition}{Definition}

\usepackage{multirow}
\usepackage{graphicx}
\usepackage{subcaption}
\usepackage{algorithm}
\usepackage{algpseudocode}

\title{Generalized Distribution-Free Semi-Supervised Learning with Risk Rewrite}

\author[1]{Yushi Hirose}
\author[1]{Hiroo Irobe}
\author[1,2]{Takafumi Kanamori}

\affil[1]{%
	Department of Mathematical and Computing Science\\
	 Institute of Science Tokyo\\
}
\affil[2]{%
	RIKEN Center for Advanced Intelligence Project
}

\begin{document}
	\maketitle
	
	\begin{abstract}
		Typical semi-supervised learning (SSL) methods rely on distributional assumptions, and their performance degrades when these are violated. While PNU learning, a risk rewriting method, offers a distribution-free alternative, it is restricted to binary classification and its variance optimality remains unclear. In this paper, we propose a generalized framework that constructs unbiased risk estimators using linear combinations of component risks, subsuming PNU learning and extending to multiclass classification. We derive the minimum achievable variance, demonstrating our estimator can attain lower variance than PNU in asymmetric loss scenarios. Furthermore, we establish a generalization bound directly linking this variance reduction to improved learning performance. Based on these theoretical insights, we introduce two practical SSL methods that empirically match or outperform existing approaches on binary and multiclass benchmarks.
	\end{abstract}
	
	\section{Introduction}\label{sec:intro}
	Although machine learning models typically require substantial labeled data to achieve good performance, acquiring such labels in real-world scenarios is often difficult and expensive. In contrast, unlabeled data is cheaper to acquire. Consequently, semi-supervised learning (SSL) has been extensively studied to leverage this unlabeled data when labeled data is scarce. Existing methods generally rely on specific distributional assumptions about the data or heuristics \citep{chapelle06semi,ouali20overview, mass25self} . Typical distributional assumptions include the \textit{manifold assumption}, which posits that high-dimensional data lie on a lower-dimensional manifold, and the \textit{cluster assumption}, which assumes that the decision boundary should pass through low-density regions to separate data clusters \citep{ouali20overview}. To encode these distributional assumptions into training a classifier, two main research directions have been studied. 
	
	The first is consistency regularization, which regularizes a classifier so that the prediction does not change significantly against the perturbation of unlabeled data. This improves the smoothness of the classifier along the data manifold. Early works were developed with graph-based regularization \citep{zhou03l,belkin06mani}. Recent methods based on deep models employ regularizers to enhance model robustness against input perturbations, data augmentation and dropout \citep{miyato19vat,laine17temp,tar17mean,xie20uda}.
	
	The second is Self-Training, which jointly trains a classifier and assigns labels (pseudo labels) to unlabeled data using current predictions. It pushes the decision boundary into low-density regions by discouraging low-confidence predictions \citep{ouali20overview}. The pseudo-labeling process has traditionally been managed by fixed confidence thresholds \citep{yaro95uns}, and recent works have adjusted them with training progress \citep{zhang21flex,wang23free}. These methods are refined by multiple classifiers \citep{blum98co,zhi05tri,pham21meta} to derive more accurate pseudo labels. Please refer to \citet{mass25self,ouali20overview} for other SSL methods, which are generally based on the distributional assumptions mentioned above.
	
	While these approaches have achieved remarkable success, their performance is often contingent on whether the underlying distributional assumptions hold for the target dataset. If these assumptions are violated, the inductive bias introduced by the unlabeled data can behave adversely, potentially degrading performance below that of supervised learning \citep{coz03semi,li11tow,krij17rob,wang22deb,arazo20pseudo}.

	\textbf{Unbiased Risk Estimators and Risk Rewriting.} A distinct line of research focuses on utilizing unlabeled data for risk evaluation without relying on restrictive distributional assumptions. This is achieved through \textit{risk rewriting} \citep{sugiyama22ws}, which reformulates the standard expected risk into an unbiased estimator computable directly from the available data distributions (e.g., combinations of labeled and unlabeled data). For instance, Positive-Unlabeled (PU) learning trains a classifier using only positive and unlabeled data by constructing such a rewritten risk estimator \citep{du14analysis}. \citet{sakai17semi} extended this to the semi-supervised setting by proposing PNU learning, which combines PN (supervised), PU, and NU (Negative-Unlabeled) risks. They theoretically demonstrated that PNU learning can reduce the variance of risk estimators compared to standard supervised learning. This approach is compelling as it is theoretically supported and applicable to any model without requiring strong distributional assumptions (such as the cluster or manifold assumptions).
	
	\textbf{Limitations and Open Problems.} 
	Despite its theoretical advantages, the PNU learning framework proposed by \citet{sakai17semi} leaves several open problems. First, their method is limited to binary classification. Second, it is unclear whether PNU is variance-optimal among broader classes of unbiased risk-rewriting estimators. In this paper, we address these limitations by generalizing the method to a linear combination set of risks $S_{lin}$ and proposing new SSL methods to minimize the variance of the risk estimator.
	
	Our contributions are summarized as follows:
	\begin{itemize}
		\item We propose a generalized framework for risk-rewriting SSL that subsumes PNU risk and extends naturally to the multiclass setting. As discussed in Section \ref{sec:conclusion}, our framework has a potential impact on a broad range of SSL and weakly supervised learning problems.
		\item We derive the theoretical minimum variance achievable by the set $S_{lin}$ and show it can have a smaller minimum variance than PNU risk in the general (asymmetric) loss case. In the symmetric loss case, we show PNU risk can achieve optimal minimum variance in $S_{lin}$.
		\item We propose two novel SSL methods: (1) an iterative optimization method and (2) a data-free method under an equal covariance assumption, and empirically demonstrate that they consistently outperform or match existing SSL methods.
	\end{itemize}

	\section{Problem setting}
	\label{sec:problemsetting}

	We consider a multiclass classification problem where the input space is $\mathcal{X} \subset \mathbb{R}^d$ and the output label space is $\mathcal{Y} = \{1, \dots, k\}$. We assume the data is generated from a joint distribution $p(x, y)$ over $\mathcal{X} \times \mathcal{Y}$. Let $p_i(x) = p(x \mid y=i)$ denote the class-conditional density for class $i$, and let $\theta_i= P(y=i)$ be the class prior probability, such that $\sum_{i=1}^k \theta_i = 1$. The distribution of unlabeled data is given by the mixture model:	$p(x) = \sum_{i=1}^k \theta_i p_i(x)$. In the semi-supervised setting, we are provided with labeled datasets $\mathcal{X}_i = \{x_j^i\}_{j=1}^{n_i} \stackrel{i.i.d.}{\sim} p_i(x)$ for each class $i \in \mathcal{Y}$, and an unlabeled dataset $\mathcal{X}_U = \{x_j^U\}_{j=1}^{n_U} \stackrel{i.i.d.}{\sim} p(x)$.
	
	\textbf{Risk Definitions.} Let $g: \mathcal{X} \to \mathbb{R}^k$ be a decision function (e.g., a neural network) and $l: \mathbb{R}^k \times \mathcal{Y} \to \mathbb{R}$ be a loss function. Assume that $g$ belongs to a function class $\mathcal{G}$. The goal of standard supervised learning is to minimize the true risk $R(g) := \mathbb{E}_{(x,y) \sim p(x,y)}[l(g(x), y)]$.
	
	To investigate unbiased risk estimators, we decompose the risk into components based on the data distribution and the label used for loss evaluation. We define component risks $R_{ij}(g)$ as:
	\begin{equation}
		R_{ij}(g) := \mathbb{E}_{x \sim p_i}[l(g(x), j)],
	\end{equation} which is the expected loss of predictor $g$ calculated over the distribution of class $i$ but evaluated with respect to a fixed label $j$. Similarly, we define the risk over the unlabeled distribution with respect to label $j$ as:
	\begin{equation}
		R_{Uj}(g) := \mathbb{E}_{x \sim p(x)}[l(g(x), j)].
	\end{equation}
	Using these components, the standard supervised risk can be expressed as $R(g) = \sum_{i=1}^k \theta_i R_{ii}(g)$.
	
	\subsection{Revisiting Binary Risk Rewriting: PU, NU, and PNU Risks}
	Although the standard supervised risk is written as $R(g) = \sum_{i=1}^k \theta_i R_{ii}(g)$, it is not the only expression. In binary classification, several different formulations have been proposed for $R(g)$ with unlabeled data. Consider the case where $k=2$, with $y=1$ and $y=2$ representing the positive and negative classes, respectively. Then the standard supervised risk (PN risk) is $R_{PN}(g) := \theta_1 R_{11}(g) + \theta_2 R_{22}(g)$.
	
	\textbf{PU Risk.}  Positive-Unlabeled (PU) learning \citep{du14analysis} addresses scenarios where negative labels are unavailable.  By exploiting the density equality $\theta_2 p_2(x) = p(x) - \theta_1 p_1(x)$, one can rewrite the risk on the negative class as $\theta_2 R_{22}(g) = R_{U2}(g) - \theta_1 R_{12}(g)$. This yields the PU risk:
	\begin{equation}
		R_{PU}(g) := \theta_1 R_{11}(g) - \theta_1 R_{12}(g) + R_{U2}(g).
	\end{equation}
	
	This risk is equivalent to the PN risk, but formulated without requiring the negative distribution.
	
	\textbf{NU Risk.} By applying a symmetric transformation (substituting the positive component via $\theta_1 p_1(x) = p(x) - \theta_2 p_2(x)$), we obtain the Negative-Unlabeled (NU) risk:
	\begin{equation}
		R_{NU}(g) := \theta_2 R_{22}(g) - \theta_2 R_{21}(g) + R_{U1}(g).
	\end{equation}
	
	\textbf{PNU Risk.} Under certain conditions and as $n_U\rightarrow\infty$, PU and NU learning outperform PN learning \citep{niu16}. To exploit this advantage further, \citet{sakai17semi} proposed PNU learning\footnote{They also proposed PUNU risk, but we omit it since it was shown to be inferior to PNU risk.}, which considers the following risk that linearly combines PU, NU, and PN risks with a parameter $\eta \in [-1, 1]$:
	$$ R_{\text{PNU}}^{\eta}(g) := 
	\begin{cases} 
		R_{\text{PNPU}}^{\eta}(g) & (\eta \ge 0) \\
		R_{\text{PNNU}}^{|\eta|}(g) & (\eta < 0)
	\end{cases}
	$$ where 
	\begin{align*}
		& R_{\mathrm{PNPU}}^\eta(g):=(1-\eta) R_{\mathrm{PN}}(g)+\eta R_{\mathrm{PU}}(g), \\
		& R_{\mathrm{PNNU}}^\eta(g):=(1-\eta) R_{\mathrm{PN}}(g)+\eta R_{\mathrm{NU}}(g) .
	\end{align*}
	
	The estimation of PNU risk with empirical distributions has a smaller variance than that of PN risk, which leads to more efficient SSL. This method has several advantages: (i) no requirement of distributional assumptions unlike other SSL methods, (ii) compatibility with any loss and classification model, and (iii) minimal additional computational costs. Unlike augmentation-heavy or teacher-student SSL methods \citep{zhang21flex, pham21meta}, our methods do not require augmentation passes or extra networks.
	
	Despite these advantages, PNU learning is limited to binary classification. How to extend it to multiclass classification is not obvious. In addition, it is not clear if we can construct a risk with a smaller estimation variance than that of the PNU risk.
	
	\section{Our proposed framework: Generalized Risk Rewrite}
	\label{sec:method}
	
	In this section, we propose a generalized framework for semi-supervised learning based on risk rewriting. We introduce a broad class of unbiased risk estimators formed by linear combinations of component risks and analyze their variance properties.	
	
	\subsection{Generalized Unbiased Estimators via Linear Combinations}
	\label{subsec:generalized_estimators}
	We define the set of linear combinations of risk functionals, $S_{lin}$, as 
	\begin{align}
		S_{lin} &:= \Biggl\{ 
		R_{lin}^{\{a_{ij}\}, \{b_j\}}(\cdot) := \sum_{i,j=1}^k a_{ij}R_{ij}(\cdot) + \sum_{j=1}^k b_j R_{Uj}(\cdot) \nonumber \\
		&\Biggm| a_{ij}, b_j \in \mathbb{R}, \quad \forall g\in\mathcal{G}, \ R_{lin}^{\{a_{ij}\}, \{b_j\}}(g) = R(g)
		\Biggr\}.
	\end{align}
	The set is defined as linear combinations of component risks and unlabeled risks, which are equal to $R(g)$ for all $g\in\mathcal{G}$. It is important to note that this formulation generalizes existing risks such as PU, NU, and PNU risks. We can build unbiased estimators for each $R_{lin}^{\{a_{ij}\}, \{b_j\}}(\cdot)\in S_{lin}$ by using empirical distributions. By optimizing the parameters $a_{ij}$ and $b_j$ over the full set $S_{lin}$, we can derive a risk that has a lower estimation variance than the existing methods.
	
	\subsection{Characterization under Linear Independence}
	\label{subsec:linear_independence}
	The number of parameters that define the risks in $S_{lin}$ can be reduced under the following mild assumption.	
	\begin{asmp}[Linear Independence]
		\label{asmp:linindep}
		The risk components are linearly independent in the sense that:
		\begin{equation}
			\sum_{i,j=1}^k c_{ij} R_{ij}(g) = 0 \quad (\forall g \in \mathcal{G}) \implies \forall i,j, \ c_{ij} = 0.
		\end{equation}
	\end{asmp} 
	
	This assumption is satisfied when $\mathcal{G}$ is reasonably large and the loss function is not constrained by  linear relations. In particular, it cannot be met by symmetric loss functions (such as the 0-1 loss), because $\sum_{j=1}^k l(g(x),j)$ is constant and the risk components are linearly dependent. The symmetric-loss case is discussed in Section~\ref{sec:sym_loss}.

	Under Assumption \ref{asmp:linindep}, we can characterize the parameters that satisfy the constraint $\forall g\in\mathcal{G}, R_{lin}^{\{a_{ij}\}, \{b_j\}}(g) = R(g)$.
	
	\begin{thm}[Parametrization of $S_{lin}$]
		\label{thm:rewrite_slin} 
		Under Assumption \ref{asmp:linindep}, $S_{lin}$ can be rewritten as
		\begin{align}
			S_{lin} = &\Biggl\{ R^{\boldsymbol{a}}_{lin}(\cdot) := \sum^k_{i=1}\theta_i R_{ii}(\cdot) + \sum^k_{i,j=1}\theta_i\left(\frac{a_j}{\theta_j}-1\right)R_{ij}(\cdot) \nonumber \\
			&\quad - \sum^k_{j=1} \left(\frac{a_j}{\theta_j}-1\right)R_{Uj}(\cdot) \Biggm| \boldsymbol{a} \in\mathbb{R}^k \Biggr\}.
		\end{align}
	\end{thm}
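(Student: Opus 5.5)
The plan is to eliminate the unlabeled components using the mixture identity, then use Assumption~\ref{asmp:linindep} to match coefficients; this is the whole proof. Since $p(x)=\sum_{i=1}^k\theta_i p_i(x)$, we have $R_{Uj}(g)=\sum_{i=1}^k\theta_i R_{ij}(g)$ for every $g$ and $j$. Substituting into a generic element of the linear span gives
\begin{equation}
\sum_{i,j=1}^k a_{ij}R_{ij}(g)+\sum_{j=1}^k b_jR_{Uj}(g)=\sum_{i,j=1}^k\bigl(a_{ij}+b_j\theta_i\bigr)R_{ij}(g).
\end{equation}
The membership condition is $\sum_{i,j}(a_{ij}+b_j\theta_i)R_{ij}(g)=\sum_i\theta_iR_{ii}(g)$ for all $g\in\mathcal{G}$, that is,
\begin{equation}
\sum_{i,j}\bigl(a_{ij}+b_j\theta_i-\theta_i\delta_{ij}\bigr)R_{ij}(g)=0 .
\end{equation}
By Assumption~\ref{asmp:linindep}, this holds if and only if $a_{ij}=\theta_i\delta_{ij}-b_j\theta_i$ for all $i,j$. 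Hence $S_{lin}$ is exactly the family parametrized freely by $b\in\mathbb{R}^k$, with $a_{ij}$ determined by $b$.

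The next step is the reparametrization. Set $b_j=-(a_j/\theta_j-1)$, i.e.\ $a_j=\theta_j(1-b_j)$. This is a bijection $\mathbb{R}^k\to\mathbb{R}^k$ because $\theta_j>0$; I will assume every class has positive prior, as the statement implicitly requires. The coefficient of $R_{ij}$ then becomes
\begin{equation}
\theta_i\delta_{ij}+\theta_i\Bigl(\frac{a_j}{\theta_j}-1\Bigr),
\end{equation}
and the coefficient of $R_{Uj}$ is $-(a_j/\theta_j-1)$. These are precisely the coefficients of $R^{\boldsymbol a}_{lin}$ in the statement. This gives both inclusions:
\begin{itemize}
\item every element of $S_{lin}$ has this form, by the necessity direction that uses linear independence;
\item every $R^{\boldsymbol a}_{lin}$ equals $R$ on $\mathcal{G}$, by the same substitution run backwards, which needs no assumption.
\end{itemize}

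One subtlety is that $S_{lin}$ is defined as a set of functionals indexed by coefficients. The necessity direction should therefore show that any admissible coefficient family $\{a_{ij},b_j\}$ coincides with the one induced by some $\boldsymbol a$. It is not enough to show only that the functionals agree. This does hold: the relation $a_{ij}=\theta_i\delta_{ij}-b_j\theta_i$ forces the $a_{ij}$ once the $b_j$ are fixed, and the $b_j$ are unconstrained. The $b_j$ are not themselves identified by the functional, because $R_{Uj}$ is linearly dependent on the $R_{ij}$; this is exactly why $k$ free parameters remain. No step is a real obstacle, and the only care needed is this identification of coefficient families.
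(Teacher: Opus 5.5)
Your proof is correct and follows the paper's argument. You substitute $R_{Uj}=\sum_i\theta_iR_{ij}$, use Assumption~\ref{asmp:linindep} to match coefficients ($a_{ij}+\theta_ib_j=\theta_i\delta_{ij}$), and reparametrize; the paper takes the diagonal coefficients $a_{jj}$ as the free parameters (via $b_j=1-a_{jj}/\theta_j$), which is exactly your $a_j=\theta_j(1-b_j)$, and you additionally make explicit the converse inclusion and the requirement $\theta_j>0$, which the paper leaves implicit.
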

	This theorem implies that each risk in $S_{lin}$ is parametrized by a $k$-dimensional vector.
	
	\subsection{Minimum variance calculation}
	\label{subsec:variance_analysis}
	In the following, we use a fixed $g$ and often omit the argument of the risk term, denoting $R_{ij}(g)$ as $R_{ij}$ when the context is clear. Let $\hat{R}_{lin}^{\boldsymbol{a}}$ and $\hat{R}_{ij}$ denote the empirical estimators that are defined by replacing true distributions with empirical distributions. We consider the asymptotic regime where the number of unlabeled samples $n_U \to \infty$. 
	
	Let $C_m \in \mathbb{R}^{k \times k}$ be the covariance matrix for class $m$, defined as $(C_m)_{ij} := \operatorname{Cov}_{x \sim p_m}[l(g(x), i), l(g(x), j)]$. The covariance of the empirical risk estimates is given by $\operatorname{Cov}[\hat{R}_{mi}, \hat{R}_{mj}] = \frac{1}{n_m}(C_m)_{ij}$. We define the diagonal matrix $Q = \operatorname{diag}(\theta_1^{-1}, \dots, \theta_k^{-1})$ and the shift vectors $\mathbf{d}_m = \mathbf{1} - \mathbf{e}_m$, where $\mathbf{1}$ is the all-ones vector and $\mathbf{e}_m$ is the standard basis vector. We now derive the variance of the estimator $\hat{R}_{lin}^{\boldsymbol{a}}$.
	
	\begin{thm}[Variance of $\hat{R}_{lin}^{\boldsymbol{a}}$]
		\begin{align}
			\operatorname{Var}\left[\hat{R}_{lin}^{\boldsymbol{a}}\right] 
			&= \sum_{m=1}^k \frac{\theta_m^2}{n_m} \left(Q \boldsymbol{a}-\mathbf{d}_m\right)^T C_m \left(Q \boldsymbol{a}-\mathbf{d}_m\right) \nonumber \\
			&= \boldsymbol{a}^T A \boldsymbol{a} - 2\mathbf{b}^T \boldsymbol{a} + c,
		\end{align}
		where we define the system matrices as \begin{align*}
			A &= \sum_{m=1}^k \frac{\theta_m^2}{n_m} Q C_m Q,	\quad\mathbf{b}= \sum_{m=1}^k \frac{\theta_m^2}{n_m} Q C_m \mathbf{d}_m,\\
			c &= \sum_{m=1}^k \frac{\theta_m^2}{n_m} \mathbf{d}_m^T C_m \mathbf{d}_m.
		\end{align*}
	\end{thm}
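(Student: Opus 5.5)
The plan is to start from the parametrization in Theorem~\ref{thm:rewrite_slin}. The key step is to eliminate the unlabeled terms in the regime $n_U\to\infty$. In that regime $\hat{R}_{Uj}\to R_{Uj}$ almost surely, with variance $O(1/n_U)$. So the unlabeled estimates act as deterministic constants and contribute nothing to the variance. The labeled samples from different classes are independent of each other and of $\mathcal{X}_U$. Hence
\begin{align}
\operatorname{Var}\bigl[\hat{R}_{lin}^{\boldsymbol{a}}\bigr] = \sum_{m=1}^k \operatorname{Var}\Bigl[\sum_{j=1}^k w_{mj}\hat{R}_{mj}\Bigr],
\end{align}
where $w_{mj}$ is the total coefficient that multiplies $\hat{R}_{mj}$.

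The next step is to read these coefficients off the formula. The term $\theta_m \hat{R}_{mm}$ contributes $\theta_m$ at $j=m$. The double sum contributes $\theta_m(a_j/\theta_j - 1)$ for every $j$. So
\begin{align}
w_{mj} = \theta_m\Bigl(\frac{a_j}{\theta_j} - 1 + \delta_{jm}\Bigr) = \theta_m\bigl(Q\boldsymbol{a} - \mathbf{d}_m\bigr)_j,
\end{align}
using $(\mathbf{d}_m)_j = 1-\delta_{jm}$. This is the one place where a sign or index slip could occur, so I will check it carefully; everything else is mechanical.

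The within-class variance then follows from $\operatorname{Cov}[\hat{R}_{mi},\hat{R}_{mj}] = (C_m)_{ij}/n_m$. The reason is that $\hat{R}_{mi}$ and $\hat{R}_{mj}$ are empirical means over the same $n_m$ i.i.d.\ samples from $p_m$. Therefore
\begin{align}
\operatorname{Var}\Bigl[\sum_j w_{mj}\hat{R}_{mj}\Bigr] = \frac{\theta_m^2}{n_m}\,(Q\boldsymbol{a}-\mathbf{d}_m)^T C_m (Q\boldsymbol{a}-\mathbf{d}_m).
\end{align}
Summing over $m$ gives the first equality.

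For the second equality, expand the quadratic form. Since $Q$ is diagonal and $C_m$ is symmetric, the cross terms combine into $-2\,\mathbf{d}_m^T C_m Q\boldsymbol{a} = -2\,(QC_m\mathbf{d}_m)^T\boldsymbol{a}$. Collecting terms with the weights $\theta_m^2/n_m$ yields $A$, $\mathbf{b}$ and $c$ exactly as stated.

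The only real obstacle is the justification of the first step, namely the asymptotic treatment of the unlabeled terms. I would state it precisely as the leading-order variance, with an $O(1/n_U)$ remainder that vanishes as $n_U\to\infty$.
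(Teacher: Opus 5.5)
Your proposal is correct and follows essentially the route the paper takes. The paper gives no separate proof of this theorem, but its proof of the symmetric-loss analogue, Theorem~\ref{thm:var_sym}, proceeds the same way: drop the unlabeled terms as $n_U\to\infty$, split the variance over the independent class samples, read off the coefficient vector $\theta_m(Q\boldsymbol{a}-\mathbf{d}_m)$, and sum the quadratic forms $\frac{1}{n_m}\mathbf{c}_m^T C_m\mathbf{c}_m$; your coefficient $w_{mj}=\theta_m(a_j/\theta_j-1+\delta_{jm})$, your expansion into $A$, $\mathbf{b}$, $c$, and your treatment of the unlabeled part as a vanishing $O(1/n_U)$ additive term (no cross-covariance, since $\mathcal{X}_U$ is independent of the labeled sets) are all right.
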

	
	Since $A$ is a sum of positive semi-definite matrices, the variance is a convex function of $\boldsymbol{a}$. Define $f(\boldsymbol{a}):=\operatorname{Var}[\hat{R}_{lin}^{\boldsymbol{a}}]$. The optimal parameter vector $\boldsymbol{a}^*$ that minimizes the variance satisfies the first-order condition:
	\begin{equation}
		\nabla_{\boldsymbol{a}} f(\boldsymbol{a}) = 2A\boldsymbol{a} - 2\mathbf{b} = 0 \implies A\boldsymbol{a}^* = \mathbf{b}.
	\end{equation}
	If $A$ is invertible, the minimum variance is given by $f(\boldsymbol{a}^*) = c - \mathbf{b}^T A^{-1} \mathbf{b}$. For simplicity, we define the total weighted covariance matrix $S$ and the weighted covariance vector $\mathbf{u}$:
	$$S := \sum_{m=1}^k \frac{\theta_m^2}{n_m} C_m, \quad \mathbf{u} := \sum_{m=1}^k \frac{\theta_m^2}{n_m} C_m \mathbf{e}_m.$$
	Then, we derive the following theorem:
	
	\begin{thm}[Minimum Variance of $\hat{R}_{lin}^{\boldsymbol{a}}$]\label{thm:min_var} Assume $n_U\rightarrow\infty$ and that $S$ is invertible. The minimum variance achievable by the linear risk estimator $\hat{R}_{lin}^{\boldsymbol{a}}$ is given by
		\begin{equation}
			\min_{\boldsymbol{a}} \operatorname{Var}\left[\hat{R}_{lin}^{\boldsymbol{a}}\right] = \sum_{m=1}^k \frac{\theta_m^2}{n_m} (C_m)_{mm} - \mathbf{u}^T S^{-1} \mathbf{u}.
		\end{equation}
	\end{thm}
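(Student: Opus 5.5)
The plan is to reduce the problem to an unconstrained least-squares minimization, using the quadratic form from the variance theorem. Write $w_m := \theta_m^2/n_m$ and make the change of variables $\mathbf{v} := Q\boldsymbol{a} - \mathbf{1}$. Since $Q$ is an invertible diagonal matrix, $\boldsymbol{a}\mapsto\mathbf{v}$ is a bijection of $\mathbb{R}^k$. Minimizing over $\boldsymbol{a}$ is therefore the same as minimizing over $\mathbf{v}$. Because $\mathbf{d}_m = \mathbf{1}-\mathbf{e}_m$, we have $Q\boldsymbol{a}-\mathbf{d}_m = \mathbf{v}+\mathbf{e}_m$. The first expression in the variance theorem then becomes
\begin{equation}
f = \sum_{m=1}^k w_m (\mathbf{v}+\mathbf{e}_m)^T C_m (\mathbf{v}+\mathbf{e}_m) = \mathbf{v}^T S \mathbf{v} + 2\mathbf{u}^T\mathbf{v} + \sum_{m=1}^k w_m (C_m)_{mm}.
\end{equation}
Expanding term by term:
\begin{itemize}
\item the quadratic terms sum to $\mathbf{v}^T S\mathbf{v}$;
\item the cross terms sum to $2\mathbf{v}^T\sum_m w_m C_m\mathbf{e}_m = 2\mathbf{u}^T\mathbf{v}$, using the symmetry of $C_m$;
\item the constant is $\mathbf{e}_m^T C_m\mathbf{e}_m = (C_m)_{mm}$.
\end{itemize}
The assumption $n_U\to\infty$ is used exactly as in the variance theorem: the unlabeled terms $\hat R_{Uj}$ contribute no variance, so only the labeled covariances $C_m$ appear.

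Next, $S$ is a sum of positive semidefinite matrices and is assumed invertible, so it is positive definite. The quadratic in $\mathbf{v}$ is therefore strictly convex with a unique minimizer $\mathbf{v}^* = -S^{-1}\mathbf{u}$. Completing the square gives
\begin{equation}
f = (\mathbf{v}+S^{-1}\mathbf{u})^T S(\mathbf{v}+S^{-1}\mathbf{u}) - \mathbf{u}^T S^{-1}\mathbf{u} + \sum_m w_m (C_m)_{mm}.
\end{equation}
The first term is nonnegative and vanishes at $\mathbf{v}^*$, which yields the claimed minimum value. The optimal parameter is recovered as $\boldsymbol{a}^* = Q^{-1}(\mathbf{1}-S^{-1}\mathbf{u})$. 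This is consistent with $A\boldsymbol{a}^*=\mathbf{b}$, since $A=QSQ$ is invertible exactly when $S$ is.

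The only delicate step is the bookkeeping in the change of variables: checking that $Q\boldsymbol{a}-\mathbf{d}_m$ really equals $\mathbf{v}+\mathbf{e}_m$, and that the symmetric cross terms combine into $2\mathbf{u}^T\mathbf{v}$ with the right sign. As a cross-check I would also verify directly that $c-\mathbf{b}^TA^{-1}\mathbf{b}$ agrees with the formula. Substituting $A=QSQ$ and $\mathbf{b}=Q\sum_m w_m C_m\mathbf{d}_m = Q(S\mathbf{1}-\mathbf{u})$ gives
\begin{equation}
\mathbf{b}^TA^{-1}\mathbf{b}=(S\mathbf{1}-\mathbf{u})^TS^{-1}(S\mathbf{1}-\mathbf{u}).
\end{equation}
Likewise $c=\mathbf{1}^TS\mathbf{1}-2\mathbf{1}^T\mathbf{u}+\sum_m w_m(C_m)_{mm}$. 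Subtracting the two leaves $\sum_m w_m(C_m)_{mm}-\mathbf{u}^TS^{-1}\mathbf{u}$, as claimed.
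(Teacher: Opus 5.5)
Your proof is correct. Your ``cross-check'' is exactly the paper's proof. The paper starts from the minimum $c-\mathbf{b}^TA^{-1}\mathbf{b}$ of the quadratic in the variance theorem and factors $A=QSQ$ and $\mathbf{b}=Q(S\mathbf{1}-\mathbf{u})$, so that $\mathbf{b}^TA^{-1}\mathbf{b}=(S\mathbf{1}-\mathbf{u})^TS^{-1}(S\mathbf{1}-\mathbf{u})$. It then expands both this term and $c$ and cancels the common terms $\mathbf{1}^TS\mathbf{1}-2\mathbf{1}^T\mathbf{u}$.

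Your main route differs only in bookkeeping, but usefully so. Recentering with $\mathbf{v}=Q\boldsymbol{a}-\mathbf{1}$ gives $Q\boldsymbol{a}-\mathbf{d}_m=\mathbf{v}+\mathbf{e}_m$, so those $\mathbf{1}$-dependent terms never appear and no cancellation is needed. Completing the square then certifies global minimality directly, rather than through convexity plus the first-order condition $A\boldsymbol{a}^*=\mathbf{b}$. It also gives the minimizer $\boldsymbol{a}^*=Q^{-1}(\mathbf{1}-S^{-1}\mathbf{u})$ in closed form, which the paper's proof never writes down.

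That formula is what underlies the paper's data-free method. With $C_m=C$ one has $S=WC$ and $\mathbf{u}=C\mathbf{w}$, where $\mathbf{w}=(w_1,\dots,w_k)^T$. Hence $S^{-1}\mathbf{u}=\mathbf{w}/W$ and $a_i^*=\theta_i(1-w_i/W)$ immediately. The paper's route has the modest advantage of reusing the system matrices $A,\mathbf{b},c$ already introduced in the variance theorem together with the standard identity for the minimum of a quadratic.
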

	
	The first term represents the pointwise variance of the supervised estimator for the same fixed $g$, and the second term $\mathbf{u}^T S^{-1} \mathbf{u}$ (which is non-negative as $S$ is positive semi-definite) represents the variance reduction from the proposed semi-supervised method.

	\subsection{Magnitude of variance reduction}\label{subsec:var_red}
	The variance reduction $\mathbf{u}^T S^{-1} \mathbf{u}$ depends on the sample sizes, class priors and covariance matrices. To analyze when the reduction is large, we assume $\forall m\in[k], C_m=C$ where $(C)_{ii}=\rho_1>0$ and $(C)_{ij}=\rho_2<0$ for $i\neq j$, since $(C_m)_{ij}$ for $i\neq j$ tends to be negative for usual loss functions.
	\begin{thm}[Variance reduction]\label{thm:varred} Assume $C$ is invertible and $\forall m\in[k], C_m=C$. Then,
		\begin{align*}
			\mathbf{u}^T S^{-1} \mathbf{u} &= (\rho_1 - \rho_2) \frac{\sum_{m=1}^kw_m^2}{W} + \rho_2 W\\
			&\leq\rho_1 \max_{m \in [k]}w_m+\rho_2\left(W-\max_{m \in [k]}w_m\right),
		\end{align*} 
		where $w_m := \frac{\theta_m^2}{n_m}$ and $W := \sum_{m=1}^k w_m$. 
	\end{thm}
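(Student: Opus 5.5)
Under the hypothesis $C_m = C$ for all $m$, the plan is to substitute directly into the closed form of Theorem~\ref{thm:min_var}; the matrix $S$ and the vector $\mathbf{u}$ then collapse to simple objects. Writing $w_m = \theta_m^2/n_m$ and $\mathbf{w} = (w_1,\dots,w_k)^T$, we have
\begin{equation}
S = \sum_{m=1}^k w_m C = W C, \qquad \mathbf{u} = \sum_{m=1}^k w_m C \mathbf{e}_m = C \mathbf{w}.
\end{equation}
Since $W>0$ (all $\theta_m>0$, $n_m>0$) and $C$ is assumed invertible, $S$ is invertible with $S^{-1} = W^{-1} C^{-1}$. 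This legitimizes the use of Theorem~\ref{thm:min_var} and gives
\begin{equation}
\mathbf{u}^T S^{-1} \mathbf{u} = \frac{1}{W}\,\mathbf{w}^T C C^{-1} C \mathbf{w} = \frac{\mathbf{w}^T C \mathbf{w}}{W},
\end{equation}
using the symmetry of $C$.

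Next I will exploit the assumed structure of $C$, namely $C = (\rho_1-\rho_2) I + \rho_2 \mathbf{1}\mathbf{1}^T$. This gives
\begin{equation}
\mathbf{w}^T C \mathbf{w} = (\rho_1-\rho_2)\sum_{m=1}^k w_m^2 + \rho_2 \Bigl(\sum_{m=1}^k w_m\Bigr)^2 = (\rho_1-\rho_2)\sum_{m=1}^k w_m^2 + \rho_2 W^2.
\end{equation}
Dividing by $W$ yields the claimed equality.

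For the inequality, I will use the elementary bound
\begin{equation}
\sum_{m} w_m^2 \le \bigl(\max_m w_m\bigr) \sum_m w_m = W \max_m w_m,
\end{equation}
which holds because every $w_m \ge 0$. Since $\rho_1>0>\rho_2$, the coefficient $\rho_1-\rho_2$ is positive, so multiplying by it preserves the direction of the inequality. This gives
\begin{equation}
\mathbf{u}^T S^{-1} \mathbf{u} \le (\rho_1-\rho_2)\max_m w_m + \rho_2 W.
\end{equation}
Regrouping as $\rho_1 \max_m w_m + \rho_2 (W - \max_m w_m)$ gives the stated bound.

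None of these steps is a real obstacle. The only points needing care are checking that the invertibility assumption on $C$ transfers to $S$, so that Theorem~\ref{thm:min_var} applies, and checking that the sign condition $\rho_1-\rho_2>0$ is what makes the final inequality go in the right direction.
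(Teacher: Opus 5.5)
Your proposal is correct and follows the paper's proof essentially step for step. You reduce to $S = WC$ and $\mathbf{u} = C\mathbf{w}$, obtain $\mathbf{w}^T C\mathbf{w}/W$, expand using $C = (\rho_1-\rho_2)I + \rho_2\mathbf{1}\mathbf{1}^T$, and bound $\sum_m w_m^2 \le W\max_m w_m$ using $\rho_1-\rho_2>0$; your remark that $S^{-1} = W^{-1}C^{-1}$ exists because $W>0$ is a small check the paper leaves implicit.
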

	
	Considering the upper bound of $\mathbf{u}^T S^{-1} \mathbf{u}$, the variance reduction is large when $\max_{m \in [k]}w_m$ is large and $W-\max_{m \in [k]}w_m$ is small, which means $\theta_i$ and $n_i$ should be disproportionate, i.e., there exists class imbalance in labeled samples. 
	
	\subsection{Comparison of Optimal Variances with PNU Learning}
	We compare the minimum variance achievable by our proposed method with that of PNU learning in binary classification. This analysis is different from the section~\ref{subsec:var_red}, which studies the absolute variance reduction relative to supervised learning. Here, we instead quantify the relative advantage of the proposed estimator over PNU learning. For simplicity, we compare our proposed estimator against the PNPU and PNNU estimators.

	\begin{thm}[Variance comparison]\label{thm:variance_dominance}Assume $k=2$ and $n_U\rightarrow\infty$. Let $\eta^*_{PU} :=\operatorname{argmin}_{\eta\in\mathbb{R}}\operatorname{Var}(\hat{R}_{\text{PNPU}}^{\eta})$ and $\eta^*_{NU} :=\operatorname{argmin}_{\eta\in\mathbb{R}}\operatorname{Var}(\hat{R}_{\text{PNNU}}^{\eta})$ be the optimal parameters for the PNPU and PNNU estimators respectively. Then, \begin{equation}\label{eq:var_com}\operatorname{Var}(\hat{R}_{lin}^{\boldsymbol{a}^*}) \le \min \left( \operatorname{Var}(\hat{R}_{\text{PNPU}}^{\eta^*_{PU}}), \operatorname{Var}(\hat{R}_{\text{PNNU}}^{\eta^*_{NU}}) \right).\end{equation}
		Furthermore, under the same assumption as Theorem \ref{thm:varred}, 
		\begin{align}
			\min &\left( \operatorname{Var}(\hat{R}_{\text{PNPU}}^{\eta^*_{PU}}), \operatorname{Var}(\hat{R}_{\text{PNNU}}^{\eta^*_{NU}}) \right)-\operatorname{Var}(\hat{R}_{lin}^{\boldsymbol{a}^*})\notag\\
			\label{eq:var_comval}&=\min\left(\frac{w_1^2 (\rho_1^2 - \rho_2^2)}{(w_1+w_2) \rho_1}, \frac{w_2^2 (\rho_1^2 - \rho_2^2)}{(w_1+w_2) \rho_1}\right)\end{align}
	\end{thm}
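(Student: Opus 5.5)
The plan is to show that both PNPU and PNNU are one-parameter sub-families of the parametrization $R^{\boldsymbol{a}}_{lin}$ in Theorem~\ref{thm:rewrite_slin}. The inequality \eqref{eq:var_com} then follows at once, since $\boldsymbol{a}^*$ minimizes over all of $\mathbb{R}^2$ while $\eta^*_{PU}$ and $\eta^*_{NU}$ minimize over lines inside $\mathbb{R}^2$. The explicit gap \eqref{eq:var_comval} will come from minimizing the same quadratic in closed form, once globally and once on each line.

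\textbf{Step 1 (embedding).} For $k=2$, expand $R^{\eta}_{\text{PNPU}}=(1-\eta)(\theta_1R_{11}+\theta_2R_{22})+\eta(\theta_1R_{11}-\theta_1R_{12}+R_{U2})$. Match its coefficients against $R^{\boldsymbol{a}}_{lin}$.
\begin{itemize}
\item The $R_{U1}$ coefficient vanishes, which forces $a_1=\theta_1$.
\item The $R_{U2}$ coefficient equals $\eta$, which forces $a_2=\theta_2(1-\eta)$.
\item Checking the remaining coefficients $R_{11},R_{12},R_{21},R_{22}$ confirms the identity $R^{\eta}_{\text{PNPU}}=R^{\boldsymbol{a}}_{lin}$.
\end{itemize}
Symmetrically, $R^{\eta}_{\text{PNNU}}$ corresponds to $a_2=\theta_2$ and $a_1=\theta_1(1-\eta)$. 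As $\eta$ ranges over $\mathbb{R}$, these sets are exactly the lines $\{a_1=\theta_1\}$ and $\{a_2=\theta_2\}$. The identity is an equality of functionals, so the empirical estimators coincide and the variance formula of the earlier variance theorem applies to them. This gives \eqref{eq:var_com}.

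\textbf{Step 2 (reduced quadratic).} Assume $C_m=C$ and substitute $\boldsymbol{v}:=Q\boldsymbol{a}-\mathbf{1}$, so that $Q\boldsymbol{a}-\mathbf{d}_m=\boldsymbol{v}+\mathbf{e}_m$. With $\boldsymbol{w}=(w_1,w_2)^T$, the variance becomes
\begin{equation}
f=W\boldsymbol{v}^TC\boldsymbol{v}+2\boldsymbol{v}^T\mathbf{u}+\rho_1W,\qquad \mathbf{u}=C\boldsymbol{w}.
\end{equation}
The global minimizer is $\boldsymbol{v}=-\boldsymbol{w}/W$. The minimum is $\rho_1W-\mathbf{u}^TC^{-1}\mathbf{u}/W=\rho_1W-\boldsymbol{w}^TC\boldsymbol{w}/W$, which is consistent with Theorem~\ref{thm:min_var}.

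\textbf{Step 3 (restricted minima).} The PNPU line is $\boldsymbol{v}=(0,t)$. On it the quadratic is $W\rho_1t^2+2tu_2+\rho_1W$, with $u_2=\rho_2w_1+\rho_1w_2$, and its minimum is $\rho_1W-u_2^2/(W\rho_1)$. The gap to the global minimum is therefore
\begin{equation}
\frac{1}{W}\Bigl(\boldsymbol{w}^TC\boldsymbol{w}-\frac{u_2^2}{\rho_1}\Bigr).
\end{equation}
Expanding gives $\boldsymbol{w}^TC\boldsymbol{w}=\rho_1(w_1^2+w_2^2)+2\rho_2w_1w_2$ and $u_2^2/\rho_1=\rho_2^2w_1^2/\rho_1+2\rho_2w_1w_2+\rho_1w_2^2$. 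The cross terms and the $w_2^2$ terms cancel, leaving $w_1^2(\rho_1^2-\rho_2^2)/(W\rho_1)$. The PNNU line gives the same expression with $w_1$ and $w_2$ exchanged.

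Finally, subtracting the common global minimum from the smaller of the two restricted minima gives the smaller of the two gaps, which is \eqref{eq:var_comval}. The only delicate point is the bookkeeping in Step 1: checking that every coefficient matches, so that the PNPU and PNNU families are exactly the two coordinate lines. The rest is elementary quadratic minimization, which uses $\rho_1>0$ and the invertibility of $C$.
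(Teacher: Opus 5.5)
Your proposal is correct and follows essentially the paper's route. Both arguments embed PNPU and PNNU as the lines $a_1=\theta_1$ and $a_2=\theta_2$, minimize the variance globally and on each line, and subtract. Your reduced quadratic in $\boldsymbol{v}=Q\boldsymbol{a}-\mathbf{1}$, restricted to the PNPU line, is exactly the paper's quadratic in $\eta$ with $t=-\eta$.

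One small wording fix: the empirical estimators coincide because you matched all six coefficients term by term, not merely because the functionals are equal. PN and PU, for instance, are equal as functionals yet have different empirical estimators.
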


	Eq. \eqref{eq:var_com} implies our proposed estimator has a minimum variance less than or equal to that of PNPU and PNNU estimators, which is rather trivial since the set of $\hat R_{lin}^{\boldsymbol{a}}$ subsumes the sets of $\hat{R}_{\text{PNPU}}^{\eta}$ and $\hat{R}_{\text{PNNU}}^{\eta}$. Eq. \eqref{eq:var_comval} implies the gap in minimum variances becomes small when $w_1\gg w_2$ or $w_1\ll w_2$, which means our method is more advantageous when $\theta_i$ and $n_i$ are proportional, i.e., the labeled data is balanced. 	 
	 
	\section{Symmetric Loss Case}\label{sec:sym_loss}
	The preceding sections analyzed the properties of $S_{lin}$ under Assumption \ref{asmp:linindep}. For symmetric loss functions such as the 0-1 loss, Assumption \ref{asmp:linindep} does not hold,  and we need a modified analysis, as described in this section.  
	
	\subsection{Problem Setup under Symmetric Loss}
	We assume the loss function $l(\cdot, \cdot)$ satisfies the symmetric condition:
	\begin{definition}[Symmetric Loss]
		A loss function $l(\cdot,\cdot)$ is said to be \textbf{symmetric} if, for any input $x$ and any prediction vector $g(x)$, the sum of losses over all possible class labels $j \in \{1, \dots, k\}$ is constant:
		\begin{equation}
			\sum_{j=1}^k l(g(x), j) = \alpha, \quad \forall x \in \mathcal{X}, \forall g,
		\end{equation}
		where $\alpha$ is a constant.
	\end{definition}
	
	Common examples include the 0-1 loss and ramp loss \citep{char19ons}.
	This constraint implies a deterministic linear dependency among the risk components:$\sum_{j=1}^k R_{ij}(g) = \alpha$. Consequently, the linear independence assumption (Assumption \ref{asmp:linindep}) does not hold, and we need a different reparametrization for $S_{lin}$.
	
	\subsection{Reparametrization of $S_{lin}$ with symmetric loss}
	To analyze the class of risks, we introduce a modified assumption similar to linear independence.
	
	\begin{asmp}\label{asmp:linindep_sym}
		\begin{align*}
			\sum_{i}^k\sum_{j}^{k-1} &c_{ij} R_{ij}(g) +c = 0 (\forall g \in \mathcal{G})\\
			& \Rightarrow \forall i\in[k],j\in[k-1], \ c_{ij} = 0 \text{ and } c=0.
		\end{align*}
	\end{asmp}
	
	Under this assumption, we can characterize $S_{lin}$ similarly to Theorem \ref{thm:rewrite_slin}.
	
	\begin{thm}\label{thm:rewrite_slin_sym}
		Under Assumption \ref{asmp:linindep_sym}, $S_{lin}$ with a symmetric loss can be rewritten as
		\small\begin{align}
			S_{lin} = &\Biggl\{ R^{\boldsymbol{a}}_{lin}(\cdot) := \sum^k_{i=1}\theta_i R_{ii}(\cdot) + \sum^k_{i=1}\sum^{k-1}_{j=1}\theta_i\left(\frac{a_j}{\theta_j}-1\right)R_{ij}(\cdot) \nonumber \\
			&\quad -\sum^{k-1}_{j=1} \left(\frac{a_j}{\theta_j}-1\right)R_{Uj}(\cdot) \Biggm| \boldsymbol{a} \in\mathbb{R}^{k-1}\Biggr\}.
		\end{align}
	\end{thm}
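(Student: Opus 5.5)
The proof mirrors that of Theorem~\ref{thm:rewrite_slin}. We first use the symmetric-loss identity to remove the $k$-th label components, then match coefficients using Assumption~\ref{asmp:linindep_sym}.

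\textbf{Step 1: reduce to independent components.} Since $\sum_{j=1}^k l(g(x),j)=\alpha$, we have $R_{ik}=\alpha-\sum_{j=1}^{k-1}R_{ij}$. Likewise $R_{Uk}=\alpha-\sum_{j=1}^{k-1}R_{Uj}$, and $R_{Uj}=\sum_i\theta_iR_{ij}$. Substituting these into a generic element $\sum_{i,j}a_{ij}R_{ij}+\sum_j b_jR_{Uj}$ of $S_{lin}$ gives
\[
\sum_{i=1}^k\sum_{j=1}^{k-1}\bigl(a_{ij}-a_{ik}+\theta_i(b_j-b_k)\bigr)R_{ij}+\alpha\Bigl(\sum_i a_{ik}+b_k\Bigr).
\]
The target risk reduces in the same way:
\[
R=\sum_{i=1}^{k-1}\theta_iR_{ii}+\theta_k\alpha-\theta_k\sum_{j=1}^{k-1}R_{kj}.
\]
Subtracting the two expressions yields a relation of the form in Assumption~\ref{asmp:linindep_sym}. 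The assumption then forces the following conditions for all $i\in[k]$ and $j\in[k-1]$:
\[
a_{ij}-a_{ik}+\theta_i(b_j-b_k)=\theta_i\delta_{ij}-\theta_k\delta_{ik},
\qquad \sum_i a_{ik}+b_k=\theta_k .
\]

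\textbf{Step 2: identify each feasible functional with a parametrized one.} Different coefficient tuples can define the same functional because of the symmetric-loss identities. So we argue at the level of functionals and show that two things coincide: the set of feasible functionals, and the set of $R^{\boldsymbol a}_{lin}$. For inclusion ``$\supseteq$'', expand $R^{\boldsymbol a}_{lin}$ via $R_{Uj}=\sum_i\theta_iR_{ij}$. The middle and last sums cancel exactly for $j\le k-1$, so $R^{\boldsymbol a}_{lin}=\sum_i\theta_iR_{ii}=R$ and it lies in $S_{lin}$. For inclusion ``$\subseteq$'', take a feasible element and write it in the reduced basis $\{R_{ij}\}_{i\in[k],j\le k-1}\cup\{R_{Uj}\}_{j\le k-1}\cup\{1\}$. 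We then choose $a_j$ so that its coefficient on $R_{Uj}$ equals $-(a_j/\theta_j-1)$, namely $a_j:=\theta_j(1-\tilde b_j)$, where $\tilde b_j=b_j-b_k$ is the reduced unlabeled coefficient.

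The remaining labeled coefficients and constant of the given element are fixed by the Step 1 conditions once $\tilde b$ is fixed. Those of $R^{\boldsymbol a}_{lin}$ satisfy the same conditions with the same $\tilde b$, after we also rewrite its $R_{kk}$ term via the symmetric identity. Hence the two functionals agree as linear expressions in the reduced basis, and therefore agree as functionals.

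\textbf{Main obstacle.} The subtle point is that elements of $S_{lin}$ are functionals, while the $R_{Uj}$ are themselves linear combinations of the $R_{ij}$. The assumption therefore only pins down combined coefficients such as $a_{ij}-a_{ik}+\theta_i(b_j-b_k)$, not the $a_{ij}$ and $b_j$ individually. The careful part is to use a single canonical reduced form for both representations and then verify the matching, including the $R_{kk}$ and constant terms.

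Under the identification that two rewrites are the same element of $S_{lin}$ when they agree as reduced linear expressions, $\boldsymbol a\in\mathbb R^{k-1}$ exhausts all degrees of freedom. This is exactly $k-1$, one fewer than in Theorem~\ref{thm:rewrite_slin}, because the $k$-th label direction is absorbed by the symmetric constraint.
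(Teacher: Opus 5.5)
Your proposal is correct and takes essentially the same route as the paper. You eliminate the $k$-th label via the symmetric identity, match coefficients under Assumption~\ref{asmp:linindep_sym}, and read off the parameter from the reduced unlabeled coefficients; your choice $a_j=\theta_j(1-\tilde b_j)$ is exactly the paper's $\tilde a_{jj}$. Your explicit $\supseteq$ direction, and your point that elements must be identified by their reduced form rather than as functionals on $\mathcal{G}$ (as functionals they would all equal $R$), are if anything more careful than the paper's write-up.
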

	
	Thus, $S_{lin}$ with a symmetric loss is parametrized by the $k-1$ parameters, $\boldsymbol{a}_{i},i\in[k-1]$.

	\subsection{Variance of $\hat{R}_{lin}^{\boldsymbol{a}}$ with Symmetric Loss}
	Since the symmetric loss constraint implies $R_{mk} = \alpha - \sum_{j=1}^{k-1} R_{mj}$, the $k$-th row and column of the covariance matrix $C_m$ are entirely determined by the covariance of the first $k-1$ risk terms. Thus, the variance of $\hat R_{lin}^{\boldsymbol{a}}$ is expressed with truncated matrices of $C_m$. Let $\check{C}_m$ be the top-left $(k-1) \times (k-1)$ submatrix of $C_m$. 
	
	\begin{thm}[Variance of $\hat{R}_{lin}^{\boldsymbol{a}}$ with symmetric loss]\label{thm:var_sym}
		With a symmetric loss function $l(\cdot,\cdot)$,
		\begin{align*}
			\operatorname{Var}\left[\hat{R}_{lin}^{\boldsymbol{a}}\right] 
			&= \sum_{m=1}^k \frac{\theta_m^2}{n_m} \left(\check{Q} \boldsymbol{a}-\check{\mathbf{d}}_m\right)^T \check{C}_m\left(\check{Q} \boldsymbol{a}-\check{\mathbf{d}}_m\right)
		\end{align*}
		where $\check{Q} = \operatorname{diag}(\frac{1}{\theta_1}, \dots, \frac{1}{\theta_{k-1}})$ and 
		$\check{\mathbf{d}}_m = \begin{cases} 
			\mathbf{1} - \mathbf{e}_m & \text{if } m < k \\
			2\mathbf{1} & \text{if } m = k
		\end{cases}.$
	\end{thm}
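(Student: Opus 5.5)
The plan is to write $\hat{R}_{lin}^{\boldsymbol{a}}$ explicitly as a sum of independent per-class empirical averages. We then eliminate the $k$-th label column using the symmetric-loss identity, and read off the quadratic form, following the same route as for the variance theorem in the linearly independent case. As there, the regime is $n_U\to\infty$. The unlabeled terms $\sum_{j<k}(a_j/\theta_j-1)\hat R_{Uj}$ have variance $O(1/n_U)$, and their covariance with the labeled terms is zero by independence of the samples. So only the labeled part contributes, and we focus on
\[
\sum_{m=1}^k \theta_m \hat R_{mm} + \sum_{m=1}^k\sum_{j=1}^{k-1}\theta_m\Bigl(\frac{a_j}{\theta_j}-1\Bigr)\hat R_{mj},
\]
taken from the parametrization in Theorem~\ref{thm:rewrite_slin_sym}.

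The key point is that symmetry holds pointwise: $\sum_{j}l(g(x),j)=\alpha$ for every $x$. It therefore also holds for empirical averages, so $\hat R_{kk}=\alpha-\sum_{j=1}^{k-1}\hat R_{kj}$ exactly and not only in expectation. We group the expression above by class $m$ and view it as $\theta_m$ times a linear combination of $(\hat R_{m1},\dots,\hat R_{m,k-1})$ plus a constant.

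For $m<k$, the diagonal term $\hat R_{mm}$ has $j=m\le k-1$, so it merges with the sum over $j$. The coefficient vector is then $\theta_m(\check Q\boldsymbol{a}-\mathbf{1}+\mathbf{e}_m)=\theta_m(\check Q\boldsymbol{a}-\check{\mathbf d}_m)$.

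For $m=k$, we substitute $\hat R_{kk}=\alpha-\sum_{j<k}\hat R_{kj}$, which subtracts $\theta_k$ from each coefficient. This gives $\theta_k(\check Q\boldsymbol{a}-\mathbf{1}-\mathbf{1})=\theta_k(\check Q\boldsymbol{a}-2\mathbf{1})$. The resulting constant $\theta_k\alpha$ does not affect the variance.

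The samples from different classes are independent, so the variance is the sum over $m$ of the variances of $\theta_m\,\boldsymbol{v}_m^T\hat{\mathbf R}_m$, where $\hat{\mathbf R}_m=(\hat R_{m1},\dots,\hat R_{m,k-1})^T$ and $\boldsymbol{v}_m=\check Q\boldsymbol{a}-\check{\mathbf d}_m$. Each $\hat R_{mj}$ is an average of $n_m$ i.i.d.\ terms, so $\operatorname{Cov}[\hat{\mathbf R}_m]=\check C_m/n_m$, which is the top-left block of $C_m/n_m$. This gives $\frac{\theta_m^2}{n_m}\boldsymbol{v}_m^T\check C_m\boldsymbol{v}_m$, and summing over $m$ yields the claimed formula.

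The only delicate step is the $m=k$ bookkeeping. There the diagonal term $R_{kk}$ is not among the retained parameters and must be eliminated through the symmetry identity, which is what produces the special shift $2\mathbf{1}$. We must also be careful that this elimination is legitimate at the level of the estimator, which the pointwise symmetry guarantees. Everything else is the same computation as in the variance theorem for the linearly independent case, restricted to the first $k-1$ coordinates.
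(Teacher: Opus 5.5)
Your proposal is correct and follows essentially the same route as the paper. Both arguments drop the unlabeled terms as $n_U\to\infty$ and split the labeled part into independent per-class pieces. For $m<k$ the diagonal term $\hat{R}_{mm}$ is merged into the sum over $j$, and for $m=k$ the term $\hat{R}_{kk}$ is eliminated via the symmetric-loss identity to produce the shift $2\mathbf{1}$; your observation that this identity holds exactly for the empirical averages is what justifies that substitution.
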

	
	To derive the minimum variance, we define $\check{S} := \sum_{m=1}^k \frac{\theta_m^2}{n_m} \check{C}_m, \quad \check{\mathbf{u}} := \sum_{m=1}^{k-1} \frac{\theta_m^2}{n_m} \check{C}_m \mathbf{e}_m - \frac{\theta_k^2}{n_k} \check{C}_k \mathbf{1}$. The minimum variance is derived similarly to Theorem \ref{thm:min_var}.
	
	\begin{thm}[Minimum Variance with Symmetric Loss]\label{thm:min_var_sym} Assume $\check{S}$ is invertible. The minimum variance achievable by the linear risk estimator $\hat{R}_{lin}^{\boldsymbol{a}}$ under symmetric loss is given by
		\begin{equation}
			\min_{\boldsymbol{a}} \operatorname{Var}\left[\hat{R}_{lin}^{\boldsymbol{a}}\right] = \sum_{m=1}^{k} \frac{\theta_m^2}{n_m} (C_m)_{mm} - \check{\mathbf{u}}^T \check{S}^{-1} \check{\mathbf{u}}.
		\end{equation}
	\end{thm}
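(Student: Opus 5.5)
The plan is to reduce Theorem~\ref{thm:min_var_sym} to the same quadratic minimization used for Theorem~\ref{thm:min_var}, starting from the variance formula in Theorem~\ref{thm:var_sym}. Write $w_m := \theta_m^2/n_m$. Expanding the quadratic form in Theorem~\ref{thm:var_sym} gives
\begin{equation}
\operatorname{Var}\bigl[\hat R_{lin}^{\boldsymbol a}\bigr] = \boldsymbol a^T \check Q \check S \check Q \boldsymbol a - 2\,\mathbf v^T \check Q \boldsymbol a + \check c,
\end{equation}
where $\mathbf v := \sum_{m=1}^k w_m \check C_m \check{\mathbf d}_m$ and $\check c := \sum_{m=1}^k w_m \check{\mathbf d}_m^T \check C_m \check{\mathbf d}_m$. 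The matrix $\check Q$ is diagonal with positive entries, so the substitution $\boldsymbol\beta = \check Q \boldsymbol a$ is a bijection of $\mathbb R^{k-1}$. The problem therefore becomes minimizing $\boldsymbol\beta^T \check S \boldsymbol\beta - 2\mathbf v^T\boldsymbol\beta + \check c$. Since $\check S$ is a positive semi-definite sum that we assume invertible, it is positive definite. The minimizer is $\boldsymbol\beta^* = \check S^{-1}\mathbf v$, with minimum value $\check c - \mathbf v^T \check S^{-1}\mathbf v$.

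Next, I would express $\mathbf v$ in terms of $\check{\mathbf u}$. Using $\check{\mathbf d}_m = \mathbf 1 - \mathbf e_m$ for $m<k$ and $\check{\mathbf d}_k = 2\cdot\mathbf 1$,
\begin{equation}
\mathbf v = \sum_{m=1}^k w_m \check C_m \mathbf 1 - \sum_{m=1}^{k-1} w_m \check C_m \mathbf e_m + w_k \check C_k \mathbf 1 = \check S\mathbf 1 - \check{\mathbf u}.
\end{equation}
Substituting this into the minimum value gives
\begin{equation}
\check c - \mathbf 1^T\check S\mathbf 1 + 2\,\mathbf 1^T\check{\mathbf u} - \check{\mathbf u}^T\check S^{-1}\check{\mathbf u}.
\end{equation}

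It then remains to show that $\check c - \mathbf 1^T\check S\mathbf 1 + 2\,\mathbf 1^T\check{\mathbf u} = \sum_m w_m (C_m)_{mm}$, and I would check this term by term in $m$.

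For $m<k$, the $m$-th contribution is
\begin{equation}
(\mathbf 1-\mathbf e_m)^T\check C_m(\mathbf 1-\mathbf e_m) - \mathbf 1^T\check C_m\mathbf 1 + 2\,\mathbf 1^T\check C_m\mathbf e_m = (\check C_m)_{mm} = (C_m)_{mm}.
\end{equation}

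For $m=k$, the contribution is $4\,\mathbf 1^T\check C_k\mathbf 1 - \mathbf 1^T\check C_k\mathbf 1 - 2\,\mathbf 1^T\check C_k\mathbf 1 = \mathbf 1^T\check C_k\mathbf 1$. This equals $\operatorname{Var}_{x\sim p_k}\bigl[\sum_{j<k} l(g(x),j)\bigr]$. By the symmetric-loss identity, $\sum_{j<k} l(g(x),j) = \alpha - l(g(x),k)$, so this variance equals $(C_k)_{kk}$.

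The main obstacle is this last bookkeeping step, especially the $m=k$ term and the sign conventions. Both the $-w_k\check C_k\mathbf 1$ in the definition of $\check{\mathbf u}$ and the factor $2$ in $\check{\mathbf d}_k$ have to combine exactly to recover $(C_k)_{kk}$. I would verify these carefully against Theorem~\ref{thm:var_sym}. The optimization step itself is routine given the invertibility of $\check S$.
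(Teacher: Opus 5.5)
Your proposal is correct and follows essentially the same route as the paper. Both arguments write the variance as a quadratic in $\check Q\boldsymbol a$ with linear coefficient $\check S\mathbf 1 - \check{\mathbf u}$, take the minimum $\check c - (\check S\mathbf 1-\check{\mathbf u})^T\check S^{-1}(\check S\mathbf 1-\check{\mathbf u})$, and cancel $\mathbf 1^T\check S\mathbf 1$ and $\mathbf 1^T\check{\mathbf u}$ against the expansion of $\check c$. Your explicit derivation of $\mathbf 1^T\check C_k\mathbf 1 = (C_k)_{kk}$ from the symmetric-loss identity fills in a step the paper uses without comment.
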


	\subsection{Comparison in Binary Classification with Symmetric Loss}
	We compare the proposed method, PNPU and PNNU learning in binary classification ($k=2$) under the symmetric loss assumption. We demonstrate that in this restricted setting, these methods are mathematically equivalent.
	
	\begin{thm}[Equivalence of risk estimators]\label{thm:equiv}
		Consider the binary classification setting ($k=2$) with a symmetric loss function. Let $S_{\text{PNPU}} := \{ R_{\text{PNPU}}^{\eta} \mid \eta \in \mathbb{R} \}$ and $S_{\text{PNNU}} := \{R_{\text{PNNU}}^{\eta} \mid \eta \in \mathbb{R} \}$ denote the sets of PNPU and PNNU risks. Then,
		\begin{equation}
			S_{lin} = S_{\text{PNPU}} = S_{\text{PNNU}}.
		\end{equation}
	\end{thm}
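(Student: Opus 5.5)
The plan is to use the one-parameter description of $S_{lin}$ from Theorem~\ref{thm:rewrite_slin_sym} and show that, once everything is written in a common set of "free" components, both $R^{\eta}_{\text{PNPU}}$ and $R^{\eta}_{\text{PNNU}}$ coincide with $R^{\boldsymbol a}_{lin}$ under an explicit affine bijection between $\eta\in\mathbb{R}$ and $a\in\mathbb{R}$. I will assume Assumption~\ref{asmp:linindep_sym}, as Theorem~\ref{thm:rewrite_slin_sym} requires.

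First, specialize Theorem~\ref{thm:rewrite_slin_sym} to $k=2$, where $\boldsymbol a=a_1=:a$ is a scalar. Writing $t:=a/\theta_1-1$, every element of $S_{lin}$ has the form
\begin{align}
R^{a}_{lin}=\theta_1R_{11}+\theta_2R_{22}+t\,\theta_1R_{11}+t\,\theta_2R_{21}-t\,R_{U1}.
\end{align}
The map $a\mapsto t$ is a bijection $\mathbb{R}\to\mathbb{R}$. Next, eliminate the label-$2$ components using the symmetric-loss identities $R_{12}=\alpha-R_{11}$, $R_{22}=\alpha-R_{21}$ and $R_{U2}=\alpha-R_{U1}$; the last follows from $\sum_j l(g(x),j)=\alpha$ after taking the expectation over $p$. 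This gives
\begin{align}
R^{a}_{lin}=\theta_2\alpha+(1+t)\theta_1R_{11}-(1-t)\theta_2R_{21}-t\,R_{U1}.
\end{align}

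Then expand $R^{\eta}_{\text{PNPU}}=(1-\eta)(\theta_1R_{11}+\theta_2R_{22})+\eta(\theta_1R_{11}-\theta_1R_{12}+R_{U2})$ and substitute the same identities. The constant terms collect to $(1-\eta)\theta_2\alpha-\eta\theta_1\alpha+\eta\alpha=\theta_2\alpha$, using $\theta_1+\theta_2=1$. The remaining terms are $(1+\eta)\theta_1R_{11}-(1-\eta)\theta_2R_{21}-\eta R_{U1}$. This is exactly the normal form above with $t=\eta$. Similarly, $R^{\eta}_{\text{PNNU}}=(1-\eta)(\theta_1R_{11}+\theta_2R_{22})+\eta(\theta_2R_{22}-\theta_2R_{21}+R_{U1})$ reduces to $\theta_2\alpha+(1-\eta)\theta_1R_{11}-(1+\eta)\theta_2R_{21}+\eta R_{U1}$, which is the normal form with $t=-\eta$.

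Since $\eta\mapsto t=\pm\eta$ and $t\mapsto a=\theta_1(1+t)$ are bijections of $\mathbb{R}$, each family sweeps out exactly $S_{lin}$, giving $S_{lin}=S_{\text{PNPU}}=S_{\text{PNNU}}$. The only delicate point is interpretation: the three sets must be compared as sets of functionals on $\mathcal G$, not as formal expressions in the component risks, because the symmetric identities make different formal expressions equal. I would state this explicitly. As a cross-check of the inclusion $S_{\text{PNPU}},S_{\text{PNNU}}\subseteq S_{lin}$, one can also note directly that these risks are affine combinations of the unbiased PN, PU and NU risks, hence equal to $R(g)$. The main obstacle is therefore only careful bookkeeping of the constant $\alpha$ terms, which must cancel to match the $\theta_2\alpha$ arising from $R_{22}$ in $R^{a}_{lin}$.
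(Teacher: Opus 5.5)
Your proposal is correct and follows the paper's proof: specialize Theorem~\ref{thm:rewrite_slin_sym} to $k=2$, eliminate the dependent components with the symmetric-loss identities, and match coefficients with $a_1=\theta_1(1+\eta)$ for PNPU and $a_1=\theta_1(1-\eta)$ for PNNU. Your single normal form in $R_{11},R_{21},R_{U1}$, with the constant $\theta_2\alpha$ checked explicitly, is slightly more careful than the paper's ``$+\,\text{const.}$''.

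One correction to your interpretive remark. Read literally as maps on $\mathcal G$, all three sets collapse to $\{R\}$ and the theorem is vacuous. The meaningful reading is equality modulo the pointwise identity $\sum_j l(g(x),j)=\alpha$, which also holds for the empirical $\hat R_{ij},\hat R_{Uj}$. That is exactly what your coefficient matching establishes, so the estimators are identical and have identical variances.
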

	
	Therefore, they achieve the same minimum variance, which implies that, PNU is already variance-optimal within the class of linear unbiased risk rewritings in this setting.
	
	\section{Relationship between semi-supervised learning and variance reduction}
	The analysis in the previous sections focused on the pointwise variance of the unbiased risk estimator for a fixed classifier $g$. The goal of semi-supervised learning is to identify the optimal predictor $\hat{g}$ using the empirical risk. In this section, we establish the theoretical connection between reducing the variance of the risk estimator and improving the generalization bound of the learned classifier.

	\subsection{Generalization bound with reduced variance}We consider $n_U\rightarrow\infty$ to analyze the generalization performance. Let $\hat{g} = \operatorname{argmin}_{g \in \mathcal{G}} \hat{R}_{lin}^{\boldsymbol{a}}(g)$ be the empirical risk minimizer using our proposed estimator, and let $g^* = \operatorname{argmin}_{g \in \mathcal{G}} R(g)$ be the true risk minimizer. 
	
	\begin{thm}[Generalization Bound for $\hat{R}_{lin}^{\boldsymbol{a}}$]\label{thm:gen_bound}
		Assume $n_U\rightarrow\infty$ and that the loss function $l(z, y)$ is $L$-Lipschitz continuous with respect to the prediction $z \in \mathbb{R}^k$, and bounded such that $0 \leq l(\cdot, \cdot) \leq c_l$.
		Assume $\max_i\left|\frac{a_i}{\theta_i}-1\right| \leq c_{\boldsymbol{a}}$.
		Let $\mathcal{N}(\mathcal{G}, \nu, \|\cdot\|_\infty)$ be the $\nu$-covering number of $\mathcal{G}$ with respect to the $L_\infty$ norm \citep{wain19high}.
		Define the maximum variance on the $\nu$-cover $C_\nu$ as $\sigma^2_{\max}(\boldsymbol{a}, \nu) := \max_{g' \in C_\nu} \operatorname{Var}[\hat{R}_{lin}^{\boldsymbol{a}}(g')]$.
		Then, for any $\nu > 0$, with probability at least $1-\delta$, the excess risk of $\hat{g}$ satisfies:
		\begin{equation*}
			\resizebox{0.95\columnwidth}{!}{$
				\begin{aligned}
					R(\hat{g}) - R(g^*) &\leq 4 L_{\boldsymbol{a}} \nu + \sqrt{8 \sigma_{\max}^2(\boldsymbol{a}, \nu) \ln\left(\frac{2\mathcal{N}(\mathcal{G}, \nu, \|\cdot\|_\infty)}{\delta}\right)} \\
					&\quad + \frac{4}{3}B_{\boldsymbol{a}}\max_{m\in[k]}\left(\frac{\theta_m}{n_m}\right) \ln\left(\frac{2\mathcal{N}(\mathcal{G}, \nu, \|\cdot\|_\infty)}{\delta}\right),
				\end{aligned}
				$}
		\end{equation*}
		where $B_{\boldsymbol{a}}:=k c_l\left(1+c_{\boldsymbol{a}}\right)$ and $L_{\boldsymbol{a}} := L(1 + k c_{\boldsymbol{a}})$.
	\end{thm}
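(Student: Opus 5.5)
We follow the standard three-step route: decompose the excess risk, discretize $\mathcal{G}$ by a $\nu$-cover, and apply Bernstein's inequality with a union bound on the cover. Since $n_U\to\infty$, the unlabeled terms $R_{Uj}$ are known exactly, so $\hat R_{lin}^{\boldsymbol a}(g)-R(g)$ contains only the labeled fluctuations. The first step is the usual decomposition
\begin{equation*}
R(\hat g)-R(g^*) \le \bigl(R(\hat g)-\hat R_{lin}^{\boldsymbol a}(\hat g)\bigr)+\bigl(\hat R_{lin}^{\boldsymbol a}(g^*)-R(g^*)\bigr)\le 2\sup_{g\in\mathcal G}\bigl|\hat R_{lin}^{\boldsymbol a}(g)-R(g)\bigr|,
\end{equation*}
which uses $\hat R_{lin}^{\boldsymbol a}(\hat g)\le \hat R_{lin}^{\boldsymbol a}(g^*)$ and the unbiasedness $\mathbb E\hat R_{lin}^{\boldsymbol a}=R$ from Theorem~\ref{thm:rewrite_slin}.

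\textbf{Discretization.} Fix a $\nu$-cover $C_\nu$ in $\|\cdot\|_\infty$. For $g$ and its nearest $g'\in C_\nu$, Lipschitzness gives $|l(g(x),j)-l(g'(x),j)|\le L\nu$. In $R_{lin}^{\boldsymbol a}$ each sample enters through the summand
\begin{equation*}
\theta_m\Bigl[l(g(x),m)+\sum_j\bigl(\tfrac{a_j}{\theta_j}-1\bigr)l(g(x),j)\Bigr],
\end{equation*}
and this changes by at most $\theta_m L\nu(1+kc_{\boldsymbol a})$. Summing over $m$ with $\sum_m\theta_m=1$ shows that both $R$ and $\hat R$ (and the unlabeled part) move by at most $L_{\boldsymbol a}\nu$. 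Hence
\begin{equation*}
\sup_{\mathcal G}|\hat R-R|\le 2L_{\boldsymbol a}\nu+\max_{g'\in C_\nu}|\hat R(g')-R(g')|,
\end{equation*}
and after the factor $2$ from the first step this yields the $4L_{\boldsymbol a}\nu$ term. Absorbing the unlabeled term requires a little care, but in the limit $n_U\to\infty$ it is the deterministic $R_{Uj}$, whose difference is bounded the same way.

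\textbf{Concentration on the cover.} For fixed $g'$, write $\hat R(g')-R(g')=\sum_m\sum_{t=1}^{n_m}Z_{m,t}$, where
\begin{equation*}
Z_{m,t}=\frac{\theta_m}{n_m}\bigl(h_m(x^m_t)-\mathbb E h_m\bigr),\qquad h_m(x)=l(g'(x),m)+\sum_j\bigl(\tfrac{a_j}{\theta_j}-1\bigr)l(g'(x),j).
\end{equation*}
These are independent and centered, with total variance $\operatorname{Var}[\hat R_{lin}^{\boldsymbol a}(g')]\le\sigma^2_{\max}$. Since $0\le l\le c_l$, we have $|h_m|\le c_l(1+kc_{\boldsymbol a})\le B_{\boldsymbol a}$, so a centered summand satisfies $|Z_{m,t}|\le \frac{\theta_m}{n_m}B_{\boldsymbol a}$ up to the factor $2$ from centering. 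Bernstein's inequality with range $b=B_{\boldsymbol a}\max_m\theta_m/n_m$ gives, with probability at least $1-\delta/\mathcal N$,
\begin{equation*}
|\hat R-R|\le\sqrt{2\sigma^2\ln(2\mathcal N/\delta)}+\tfrac{2}{3}b\ln(2\mathcal N/\delta).
\end{equation*}
A union bound over $C_\nu$, followed by multiplication by $2$, produces $\sqrt{8\sigma^2_{\max}\ln(\cdot)}$ and $\frac43 b\ln(\cdot)$, which matches the stated constants exactly.

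The main obstacle is the constant bookkeeping in the range bound, in particular obtaining $B_{\boldsymbol a}=kc_l(1+c_{\boldsymbol a})$ rather than $2c_l(1+kc_{\boldsymbol a})$. Centering a quantity in $[\,\cdot,\cdot\,]$ of width $W$ gives $|Z|\le W$, so I would bound the width of $h_m$ by noting that $l(\cdot,m)$ and the $k$ terms $(\frac{a_j}{\theta_j}-1)l(\cdot,j)$ each have width at most $c_l(1+c_{\boldsymbol a})$ after grouping the $j=m$ term with $l(\cdot,m)$. This gives a width of at most $kc_l(1+c_{\boldsymbol a})$, using $k\ge 2$ for the grouped term.
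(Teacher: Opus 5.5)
Your proposal is correct and follows the paper's proof essentially step for step: the $2\sup_{\mathcal G}|\hat R_{lin}^{\boldsymbol a}-R|$ decomposition, the $\nu$-cover contributing $2L_{\boldsymbol a}\nu$, and Bernstein's inequality with a union bound over $C_\nu$, where the quadratic in $\epsilon$ is resolved via $\sqrt{x+y}\le\sqrt x+\sqrt y$ and yields exactly the stated constants. Your width argument, which bounds the range of $h_m$ by $B_{\boldsymbol a}$ so that the \emph{centered} summands satisfy Bernstein's range condition, is somewhat more careful than the paper, which only bounds the uncentered $Z_q^m$; also, the unlabeled term you flag in the discretization step cancels exactly in $\hat R_{lin}^{\boldsymbol a}-R$ when $n_U\to\infty$, so the $2L_{\boldsymbol a}\nu$ slack is justified.
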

	
	Let $n = \min_m(n_m)$. As $n \rightarrow \infty$, the maximum variance $\sigma_{\max}^2(\boldsymbol{a}, \nu)$ scales as $\mathcal{O}(1/n)$, which implies that the second term of the generalization bound is $\mathcal{O}(1/\sqrt{n})$. Furthermore, since the third term is proportional to $\max_{m\in[k]}\left(\frac{\theta_m}{n_m}\right)$, it scales as $\mathcal{O}(1/n)$. Considering only the dominant $\mathcal{O}(1/\sqrt{n})$ term and assuming $\nu$ is small, this theorem implies that the generalization error is small when the maximum variance $\sigma_{\max}^2(\boldsymbol{a}, \nu)$ is small. Thus, a good choice of $\boldsymbol{a}$ improves the SSL performance.
	
	\subsection{Practical implementation of SSL}\label{subsec:practicalssl}
	Based on the relationship established above, we propose two SSL methods to learn the classifier $\hat g$.
	
	\paragraph{1. Iterative optimization method} 
	\begin{algorithm}[tbp]
		\caption{Iterative Optimization SSL}
		\label{alg:iterative_opt}
		\begin{algorithmic}[1]
			\Require $\mathcal{X}_L = \cup_{i=1}^k \mathcal{X}_i$, $\mathcal{X}_U$, $\mathcal{X}_{val}$, update interval $m$, learning rate $\eta$, epochs $E$.
			\State Initialize classifier $\hat{g}$ and risk parameter $\boldsymbol{a}$.
			\For{$e = 1$ \textbf{to} $E$}
			\If{$e \pmod m == 0$}
			\State Estimate covariance $C_i$ using outputs of $\hat{g}$ on $\mathcal{X}_{val}$.
			\State Update $\boldsymbol{a} \leftarrow \boldsymbol{a}^*$ minimizing variance (Theorem~\ref{thm:min_var}) on $\mathcal{X}_{val}$.
			\EndIf
			\State Update $\hat{g} \leftarrow \hat{g} - \eta \nabla_{\hat{g}} \hat{R}_{lin}^{\boldsymbol{a}}(\hat{g})$ using $\mathcal{X}_L \cup \mathcal{X}_U$.
			\EndFor
			\State \Return $\hat{g}$
		\end{algorithmic}
	\end{algorithm}
	
	Since the optimal parameter $\boldsymbol{a}$ that minimizes the bound in Theorem \ref{thm:gen_bound} is unknown in practice, we propose empirical iterative optimization of $g$ and $\boldsymbol{a}$. The algorithm is described in Algorithm \ref{alg:iterative_opt}. It optimizes $\hat g$ with $\hat R_{lin}^{\boldsymbol{a}}$, calculating $\boldsymbol{a}$ every $m$ epochs to minimize the variance with the current $\hat g$ using validation data.
	
	\paragraph{2. Data-free method under equal covariance assumption} The first method requires a certain size of validation data to accurately estimate $\boldsymbol{a}^*$. The estimation becomes difficult when the number of classes $k$ is large. If we assume the equal covariance $\forall m\in[k], C_m=C$, the optimal $\boldsymbol{a}^*$ is calculated as $\boldsymbol{a}^*_i=\theta_i\left(1-\frac{w_i}{\sum_{m=1}^k w_m}\right)$ where $w_i:=\theta_i^2/n_i$. We use the linear risk with this $\boldsymbol{a}^*$ to learn $\hat g$. This does not require validation data to estimate $\boldsymbol{a}^*$.	
	
	\paragraph{Non-negative risk correction for complex models}
	Note that we can express the linear risk estimator as$$\hat R_{lin}^{\boldsymbol{a}}(g)=\sum^k_{j=1}\left(a_j\hat R_{jj}(g)+\left(1-\frac{a_j}{\theta_j}\right)\Delta_j(g)\right),$$where $\Delta_j(g):=\hat R_{Uj}(g)-\sum_{i\neq j}^k\theta_i\hat R_{ij}(g)$. Here, $\Delta_j(g)$ can be viewed as a risk-rewriting estimator for $\theta_j R_{jj}(g)$. As seen in PU and other risk-rewriting methods \citep{kiryo17pu,lu20mit,tang23multi}, when $\mathcal{G}$ consists of complex models such as deep neural networks, $\Delta_j(g)$ can become negative even when $R_{jj}(g)$ is non-negative, which leads to overfitting. To mitigate this issue, similarly to the approach by \citet{kiryo17pu}, we replace $\Delta_j(g)$ within $R_{lin}^{\boldsymbol{a}}(g)$ with a non-negative counterpart, $\Delta_j^{nn}(g):=\max\{0,\Delta_j(g)\}$, when training complex models.
	
	\section{Experiments}
	\subsection{Variance comparison of $R_{PN}$, $R_{PNU}$ and $R_{lin}^{\boldsymbol{a}^*}$}\label{subsec:varcomp}
	In this subsection, we empirically validate the variance reduction achieved by the proposed linear risk estimator $\hat{R}_{lin}^{\boldsymbol{a}}(g)$. We compare the variance of three unbiased risk estimators for a pre-trained classifier $g$: the standard supervised risk estimator $\hat{R}(g)$ (PN risk, which corresponds to $\hat{R}_{lin}^{\boldsymbol{a}}(g)$ where $\boldsymbol{a}_i=\theta_i$), the PNU risk estimator, and our proposed linear risk estimator with optimal parameter~$\boldsymbol{a}^*$. We present binary classification results ($k=2$) here; multiclass results are deferred to the Appendix.
	
	\paragraph{Datasets.}
	We use two datasets: a synthetic two-dimensional Gaussian dataset and Credit default dataset from the UCI Machine Learning Repository \citep{kelly23UCI}. For the Gaussian dataset, the class-conditional distributions are
	$p_1(x) = \mathcal{N}\!\bigl((1,\; 1)^{\!\top}, I_2\bigr)$ and  
	$p_2(x) = \mathcal{N}\!\bigl((0,\; 0)^{\!\top}, I_2\bigr)$.

	\paragraph{Model and training.}
	We train a logistic regression model $g(x) = w^{\!\top}\!x + b$ by minimizing the binary cross-entropy (BCE) loss with $n_1 = n_2 = 30$ labeled samples via stochastic gradient descent.
	
	\paragraph{Evaluation loss.}
	For the variance comparison, we use two loss functions: the BCE loss
	$l(g(x), y) = -y \log \sigma(g(x)) - (1-y) \log(1-\sigma(g(x)))$ as an asymmetric loss, and the 0-1 loss as a symmetric loss.
	
	\paragraph{Evaluation protocol.}
	With the trained classifier $g$ fixed, we evaluate the variance of each risk estimator by repeatedly sampling from held-out data. To compute the optimal parameters $\eta^*$ and $\boldsymbol{a}^*$, we estimate the covariance matrices $C_m$ for all $m \in [k]$ using the entire held-out data. For the PNU risk, we consider the set $S_{PNU}:=S_{PNPU}\cup S_{PNNU}$ as defined in Theorem~\ref{thm:equiv}.
	
	For each experimental condition (class prior $\theta_1$, unlabeled sample size $n_U$), we conduct $5{,}000$ independent trials to estimate the risk variances. We vary $\theta_1 \in \{0.3, 0.5, 0.7\}$ and $n_U \in \{50, 100, 200, 500, 1000\}$. The labeled sample sizes for risk estimation are fixed at $n_1 = n_2 = 30$ across all settings.

	\subsubsection{Results}
	\label{sec:results}
	
	Figure~\ref{fig:var_compare_all} visualizes the variance ratios $\mathrm{Var}(\hat{R}_{\mathrm{PNU}})/
	\mathrm{Var}(\hat{R})$ and $\mathrm{Var}(\hat{R}_{lin}^{\boldsymbol{a}^{*}})/
	\mathrm{Var}(\hat{R})$ as a function of
	$n_U$ for each class prior.
	
	\begin{figure}[tbp]
		\centering
		\begin{subfigure}[b]{0.48\columnwidth}
			\centering
			\includegraphics[width=\textwidth]{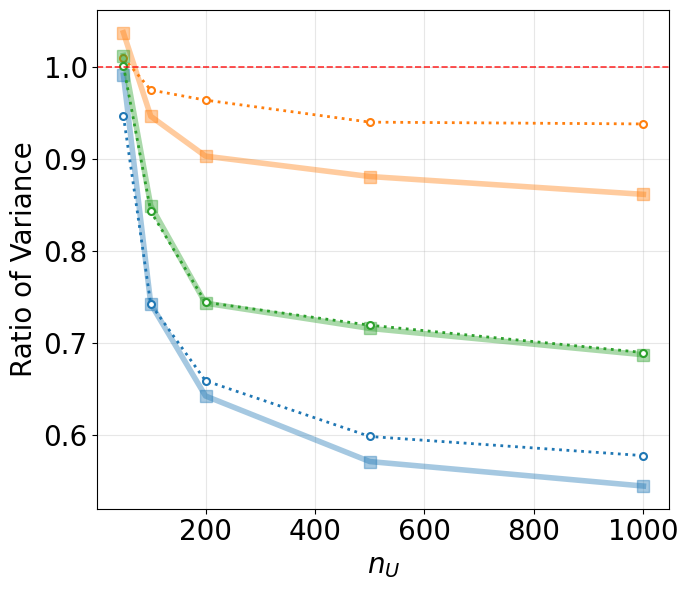}
			\vspace{-1.8em}
			\caption{Gaussian (BCE loss)}
			\label{fig:gaussian_bce}
		\end{subfigure}
		\hfill
		\begin{subfigure}[b]{0.48\columnwidth}
			\centering
			\includegraphics[width=\textwidth]{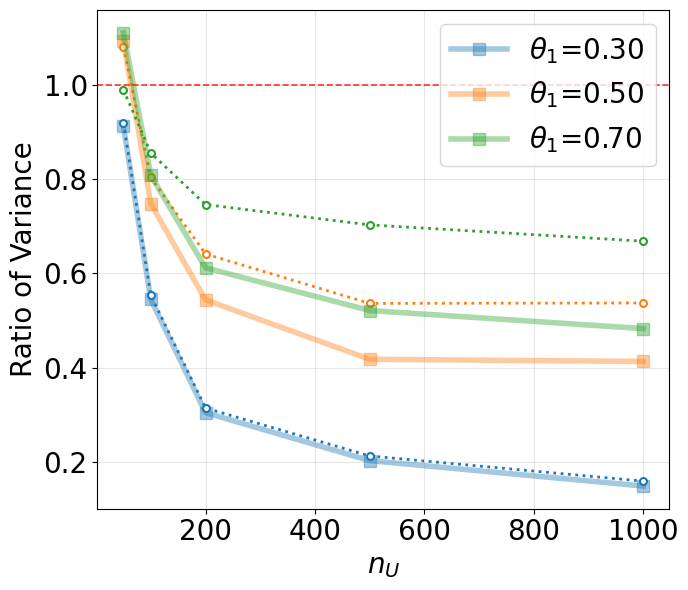}
			\vspace{-1.8em}
			\caption{Credit (BCE loss)}
			\label{fig:credit_bce}
		\end{subfigure}
		
		\vspace{0.2cm} 
		\begin{subfigure}[b]{0.48\columnwidth}
			\centering
			\includegraphics[width=\textwidth]{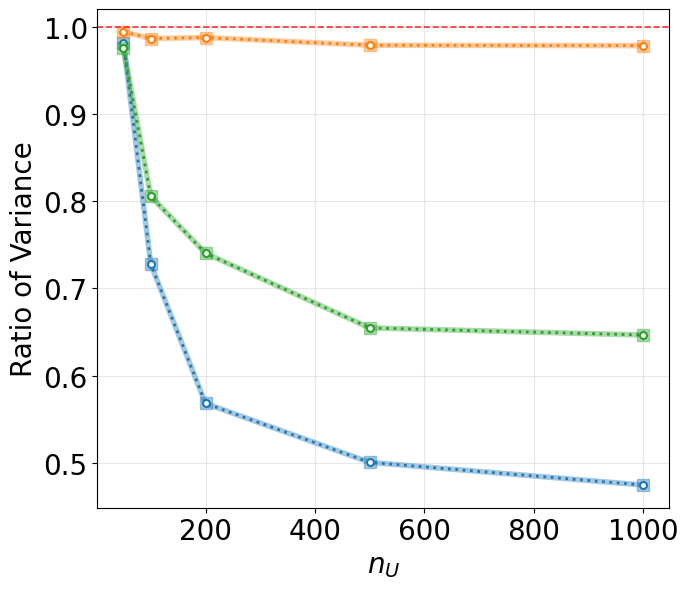}
			\vspace{-1.8em}
			\caption{Gaussian (0-1 loss)}
			\label{fig:gaussian_01}
		\end{subfigure}
		\hfill
		\begin{subfigure}[b]{0.48\columnwidth}
			\centering
			\includegraphics[width=\textwidth]{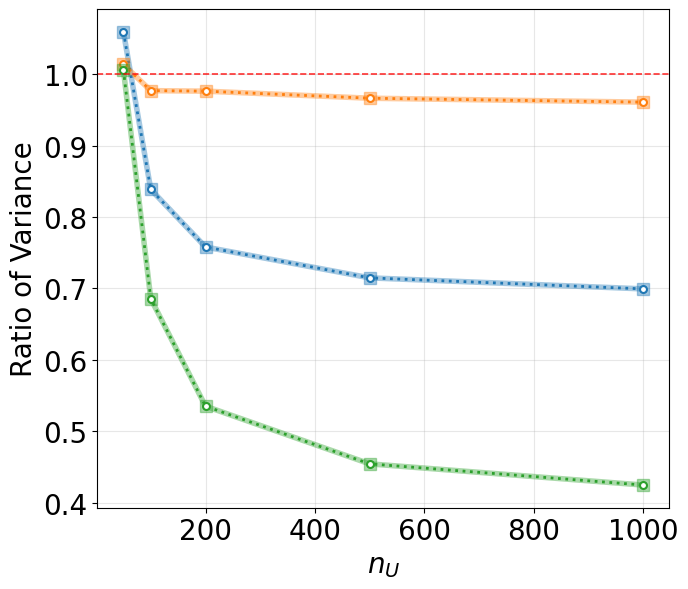}
			\vspace{-1.8em}
			\caption{Credit (0-1 loss)}
			\label{fig:credit_01}
		\end{subfigure}
		
		\caption{Variance ratio (each method / PN) as a function of the unlabeled sample size $n_U$ for $\theta_1 \in \{0.3,\; 0.5,\; 0.7\}$. Solid lines denote the proposed linear risk estimator ($\hat{R}_{lin}^{\boldsymbol{a}^*}$); dotted lines denote the PNU risk estimator. The red dashed line indicates the supervised baseline ($\mathrm{ratio} = 1$). Both methods benefit from increasing $n_U$, and the proposed method consistently achieves equal or lower variance than PNU.}
		\label{fig:var_compare_all}
	\end{figure}

	\paragraph{Effect of unlabeled sample size.}
	As the number of unlabeled samples $n_U$ increases, both the PNU and the proposed linear risk estimator achieve progressively lower variance ratios, confirming that unlabeled data effectively reduces the variance of risk estimation.
	
	\paragraph{Effect of class prior.}
	When the class prior is imbalanced ($\theta_1 = 0.3$ or $\theta_1 = 0.7$),
	both methods achieve substantial variance reduction. These results are consistent with Theorem~\ref{thm:varred}.
	In the balanced case ($\theta_1 = 0.5$) with the asymmetric loss (BCE), the PNU risk shows only marginal
	improvement (e.g., a ratio of $0.953$ at $n_U = 1{,}000$), whereas the proposed linear
	risk estimator still achieves a meaningful reduction (a ratio of $0.883$) , as predicted by Theorem~\ref{thm:variance_dominance}. 
	
	\paragraph{Effect of loss function symmetry.}
	When the loss function is symmetric, the variance ratios of the proposed estimator and the PNU estimator are nearly identical. This is expected, since their optimal variances coincide under symmetric loss, as established in Theorem~\ref{thm:equiv}.

	\subsection{Comparison with existing semi-supervised learning methods}\label{subsec:sslexp}
	
	\begin{table}[t]
		\centering
		\caption{Binary classification results (test accuracy \%, mean$\pm$std over 30 seeds). The best and second-best methods per row are \textbf{bolded} and \underline{underlined}, respectively.}
		\label{tab:binary}
		\resizebox{\columnwidth}{!}{
			\begin{tabular}{ll @{\hspace{4pt}}c@{\hspace{4pt}}c@{\hspace{4pt}}c@{\hspace{4pt}}c@{\hspace{4pt}}c}
				\toprule
				Dataset & $(n_1, n_2)$ & Sup. & PNU & Ours (Iter) & PL & VAT \\
				\midrule
				\multirow{4}{*}{Adult}
				& $(15, 45)$   & $\underline{79.7{\pm}2.4}$ & $79.1{\pm}2.4$ & $\mathbf{80.3{\pm}1.6}$ & $78.9{\pm}1.8$ & $78.4{\pm}2.7$ \\
				& $(30, 30)$   & $\underline{80.4{\pm}2.1}$ & $79.7{\pm}2.5$ & $\mathbf{80.8{\pm}1.5}$ & $78.2{\pm}2.1$ & $78.2{\pm}2.0$ \\
				& $(50, 150)$  & $\underline{81.5{\pm}2.0}$ & $80.5{\pm}2.4$ & $\mathbf{82.2{\pm}1.0}$ & $79.4{\pm}1.9$ & $79.6{\pm}1.9$ \\
				& $(100, 100)$ & $\underline{81.5{\pm}1.8}$ & $81.3{\pm}2.1$ & $\mathbf{82.4{\pm}1.0}$ & $78.9{\pm}2.0$ & $78.7{\pm}2.2$ \\
				\midrule
				\multirow{4}{*}{Banknote}
				& $(15, 45)$   & $95.6{\pm}3.2$ & $\underline{96.9{\pm}2.0}$ & $\mathbf{97.3{\pm}1.5}$ & $94.3{\pm}4.4$ & $95.7{\pm}3.8$ \\
				& $(30, 30)$   & $96.8{\pm}2.4$ & $\underline{97.2{\pm}1.3}$ & $\mathbf{97.8{\pm}1.1}$ & $95.7{\pm}3.6$ & $97.1{\pm}1.6$ \\
				& $(50, 150)$  & $\underline{98.4{\pm}1.0}$ & $97.3{\pm}1.0$ & $\mathbf{98.6{\pm}0.9}$ & $97.2{\pm}1.8$ & $97.8{\pm}1.0$ \\
				& $(100, 100)$ & $\underline{98.6{\pm}1.0}$ & $98.0{\pm}0.8$ & $\mathbf{98.8{\pm}1.0}$ & $97.6{\pm}1.1$ & $98.0{\pm}0.9$ \\
				\midrule
				\multirow{4}{*}{\shortstack[l]{Breast\\Cancer}}
				& $(15, 45)$   & $93.6{\pm}2.9$ & $\underline{93.9{\pm}2.8}$ & $\mathbf{94.5{\pm}2.3}$ & $93.4{\pm}2.5$ & $93.8{\pm}2.9$ \\
				& $(30, 30)$   & $\underline{93.9{\pm}2.2}$ & $\underline{93.9{\pm}2.5}$ & $\mathbf{94.4{\pm}1.8}$ & $93.1{\pm}2.7$ & $93.5{\pm}3.0$ \\
				& $(50, 150)$  & $\underline{95.4{\pm}1.8}$ & $94.9{\pm}2.3$ & $\mathbf{95.5{\pm}1.7}$ & $94.8{\pm}2.1$ & $95.1{\pm}2.1$ \\
				& $(100, 100)$ & $\mathbf{95.6{\pm}1.6}$ & $94.3{\pm}2.6$ & $\mathbf{95.6{\pm}1.8}$ & $\underline{94.4{\pm}2.0}$ & $94.3{\pm}2.3$ \\
				\midrule
				\multirow{4}{*}{Credit}
				& $(15, 45)$   & $\mathbf{79.0{\pm}1.1}$ & $\underline{78.6{\pm}1.3}$ & $78.2{\pm}2.0$ & $72.8{\pm}4.6$ & $71.7{\pm}4.5$ \\
				& $(30, 30)$   & $78.6{\pm}1.6$ & $\mathbf{79.2{\pm}1.0}$ & $\underline{78.9{\pm}1.2}$ & $66.3{\pm}6.5$ & $67.8{\pm}5.6$ \\
				& $(50, 150)$  & $\mathbf{79.5{\pm}1.2}$ & $\mathbf{79.5{\pm}0.9}$ & $\underline{78.9{\pm}1.3}$ & $72.1{\pm}4.5$ & $74.1{\pm}3.9$ \\
				& $(100, 100)$ & $\mathbf{79.8{\pm}1.1}$ & $\underline{79.7{\pm}0.8}$ & $79.3{\pm}1.1$ & $70.4{\pm}5.3$ & $72.5{\pm}4.6$ \\
				\midrule
				\multirow{4}{*}{\shortstack[l]{MNIST\\4v9}}
				& $(15, 45)$   & $92.0{\pm}3.9$ & $\underline{94.8{\pm}1.5}$ & $94.7{\pm}1.9$ & $\mathbf{95.2{\pm}3.0}$ & $93.0{\pm}3.3$ \\
				& $(30, 30)$   & $94.1{\pm}2.5$ & $95.5{\pm}1.0$ & $\underline{95.6{\pm}1.6}$ & $\mathbf{96.5{\pm}1.7}$ & $95.2{\pm}1.3$ \\
				& $(50, 150)$  & $96.1{\pm}1.3$ & $96.9{\pm}1.1$ & $\mathbf{97.3{\pm}0.9}$ & $\underline{97.2{\pm}0.8}$ & $\underline{97.2{\pm}0.6}$ \\
				& $(100, 100)$ & $96.6{\pm}1.7$ & $97.3{\pm}0.9$ & $\mathbf{97.7{\pm}0.7}$ & $97.4{\pm}1.4$ & $\underline{97.6{\pm}0.5}$ \\
				\midrule
				\multirow{4}{*}{\shortstack[l]{CIFAR\\CvD}}
				& $(15, 45)$   & $54.3{\pm}2.8$ & $\underline{55.0{\pm}2.4}$ & $\mathbf{55.4{\pm}1.6}$ & $53.8{\pm}2.5$ & $54.6{\pm}2.5$ \\
				& $(30, 30)$   & $55.2{\pm}2.6$ & $\mathbf{56.0{\pm}2.1}$ & $\underline{55.8{\pm}2.0}$ & $55.7{\pm}2.6$ & $55.4{\pm}2.4$ \\
				& $(50, 150)$  & $57.7{\pm}2.5$ & $\mathbf{58.3{\pm}2.1}$ & $\underline{58.2{\pm}2.1}$ & $57.1{\pm}2.7$ & $57.7{\pm}2.2$ \\
				& $(100, 100)$ & $59.2{\pm}1.9$ & $\mathbf{59.6{\pm}2.1}$ & $\mathbf{59.6{\pm}1.7}$ & $59.2{\pm}1.5$ & $\underline{59.4{\pm}1.4}$ \\
				\bottomrule
		\end{tabular}}
	\end{table}
	
	\begin{table}[t]
		\centering
		\caption{Multiclass classification results (test accuracy \%, mean$\pm$std over 30 seeds). The best and second-best methods per row are \textbf{bolded} and \underline{underlined}, respectively.}
		\label{tab:multi_tabular}
		\resizebox{\columnwidth}{!}{
				\begin{tabular}{ll @{\hspace{4pt}}c@{\hspace{4pt}}c@{\hspace{4pt}}c@{\hspace{4pt}}c@{\hspace{4pt}}c}
					\toprule
					Dataset & Regime ($n$) & Sup. & Ours (Iter) & Ours (EC) & PL & VAT \\
					\midrule
					\multirow{7}{*}{Covertype}
					& Bal.\ (70)   & $\underline{52.5{\pm}3.1}$ & $52.0{\pm}2.9$ & $\mathbf{53.1{\pm}3.4}$ & $42.1{\pm}4.9$ & $42.4{\pm}4.6$ \\
					& Bal.\ (140)  & $\mathbf{56.5{\pm}2.5}$ & $55.0{\pm}2.6$ & $\underline{56.1{\pm}2.7}$ & $46.7{\pm}3.9$ & $46.6{\pm}3.1$ \\
					& Bal.\ (350)  & $\mathbf{61.6{\pm}1.8}$ & $\underline{61.2{\pm}2.1}$ & $\underline{61.2{\pm}1.9}$ & $52.9{\pm}2.4$ & $51.8{\pm}2.5$ \\
					& Mild (140)   & $56.9{\pm}2.8$ & $\underline{57.4{\pm}2.1}$ & $\mathbf{57.7{\pm}1.8}$ & $49.6{\pm}2.9$ & $49.8{\pm}2.2$ \\
					& Mild (350)   & $62.0{\pm}2.0$ & $\mathbf{62.7{\pm}1.4}$ & $\underline{62.3{\pm}1.7}$ & $54.3{\pm}2.3$ & $54.7{\pm}2.1$ \\
					& Sev.\ (140)  & $\mathbf{58.8{\pm}2.3}$ & $\underline{58.6{\pm}2.5}$ & $\mathbf{58.8{\pm}2.0}$ & $53.7{\pm}2.8$ & $54.4{\pm}2.3$ \\
					& Sev.\ (350)  & $63.2{\pm}1.6$ & $\underline{63.3{\pm}1.7}$ & $\mathbf{63.7{\pm}1.5}$ & $58.4{\pm}2.5$ & $58.2{\pm}2.2$ \\
					\midrule
					\multirow{7}{*}{Shuttle}
					& Bal.\ (70)   & $94.3{\pm}3.1$ & $\mathbf{96.7{\pm}1.2}$ & $\underline{96.2{\pm}1.3}$ & $89.9{\pm}4.6$ & $95.5{\pm}2.4$ \\
					& Bal.\ (140)  & $94.9{\pm}3.0$ & $\mathbf{97.3{\pm}1.0}$ & $\underline{97.1{\pm}1.0}$ & $93.8{\pm}3.8$ & $96.7{\pm}0.7$ \\
					& Bal.\ (350)  & $97.2{\pm}1.1$ & $\underline{97.3{\pm}1.0}$ & $\mathbf{97.4{\pm}0.7}$ & $96.9{\pm}1.3$ & $96.7{\pm}1.2$ \\
					& Mild (140)   & $\mathbf{96.9{\pm}1.1}$ & $\mathbf{96.9{\pm}1.0}$ & $\mathbf{96.9{\pm}0.9}$ & $96.2{\pm}2.0$ & $\underline{96.5{\pm}1.3}$ \\
					& Mild (350)   & $97.2{\pm}1.0$ & $\mathbf{97.6{\pm}1.1}$ & $\underline{97.3{\pm}0.7}$ & $97.0{\pm}1.0$ & $96.7{\pm}0.7$ \\
					& Sev.\ (140)  & $97.0{\pm}1.2$ & $\mathbf{97.2{\pm}0.9}$ & $\underline{97.1{\pm}1.7}$ & $96.1{\pm}1.6$ & $96.0{\pm}1.6$ \\
					& Sev.\ (350)  & $97.3{\pm}0.8$ & $\mathbf{97.6{\pm}0.8}$ & $\underline{97.4{\pm}1.0}$ & $97.3{\pm}0.8$ & $96.8{\pm}1.0$ \\
					\midrule
					\multirow{7}{*}{DryBeans}
					& Bal.\ (70)   & $86.4{\pm}1.7$ & $87.6{\pm}1.6$ & $86.6{\pm}1.6$ & $\underline{88.1{\pm}2.0}$ & $\mathbf{88.5{\pm}1.3}$ \\
					& Bal.\ (140)  & $88.8{\pm}1.6$ & $\underline{89.9{\pm}0.9}$ & $89.2{\pm}1.0$ & $\mathbf{90.0{\pm}1.1}$ & $\underline{89.9{\pm}1.4}$ \\
					& Bal.\ (350)  & $90.5{\pm}0.7$ & $\mathbf{91.3{\pm}0.6}$ & $90.4{\pm}0.9$ & $\underline{90.8{\pm}0.9}$ & $\underline{90.8{\pm}1.0}$ \\
					& Mild (140)   & $88.8{\pm}1.6$ & $\underline{89.1{\pm}1.7}$ & $88.9{\pm}1.6$ & $88.7{\pm}1.8$ & $\mathbf{89.5{\pm}1.5}$ \\
					& Mild (350)   & $90.4{\pm}1.2$ & $\underline{90.7{\pm}0.9}$ & $\underline{90.7{\pm}0.7}$ & $90.6{\pm}1.0$ & $\mathbf{90.8{\pm}0.8}$ \\
					& Sev.\ (140)  & $84.2{\pm}2.5$ & $84.9{\pm}2.6$ & $\underline{85.1{\pm}2.2}$ & $83.4{\pm}2.9$ & $\mathbf{85.3{\pm}2.2}$ \\
					& Sev.\ (350)  & $88.1{\pm}1.7$ & $\underline{89.1{\pm}1.5}$ & $88.7{\pm}1.4$ & $88.4{\pm}2.0$ & $\mathbf{89.2{\pm}1.3}$ \\
					\midrule
					\multirow{7}{*}{MNIST}
					& Bal.\ (200)  & $86.9{\pm}1.0$ & $86.8{\pm}1.7$ & $\underline{87.3{\pm}1.3}$ & $\mathbf{89.7{\pm}1.6}$ & $87.1{\pm}1.4$ \\
					& Bal.\ (500)  & $91.9{\pm}1.0$ & $91.8{\pm}1.4$ & $92.2{\pm}0.8$ & $\mathbf{92.9{\pm}0.9}$ & $\underline{92.3{\pm}0.7}$ \\
					& Bal.\ (1500) & $\underline{95.7{\pm}0.6}$ & $95.4{\pm}0.6$ & $95.4{\pm}0.6$ & $\mathbf{95.9{\pm}0.6}$ & $\underline{95.7{\pm}0.5}$ \\
					& Mild (500)   & $89.9{\pm}1.0$ & $89.8{\pm}1.4$ & $\underline{90.4{\pm}1.2}$ & $\mathbf{91.7{\pm}1.5}$ & $90.1{\pm}1.3$ \\
					& Mild (1500)  & $\underline{94.8{\pm}0.6}$ & $94.3{\pm}0.8$ & $94.4{\pm}0.7$ & $\mathbf{95.1{\pm}0.6}$ & $\underline{94.8{\pm}0.6}$ \\
					& Sev.\ (500)  & $77.3{\pm}2.5$ & $\mathbf{81.7{\pm}3.0}$ & $\underline{80.6{\pm}2.4}$ & $79.6{\pm}2.8$ & $77.1{\pm}2.6$ \\
					& Sev.\ (1500) & $88.5{\pm}1.5$ & $\underline{89.5{\pm}1.7}$ & $89.2{\pm}1.4$ & $\mathbf{90.7{\pm}2.2}$ & $88.4{\pm}1.7$ \\
					\bottomrule
			\end{tabular}}
		\end{table}

		We compare the proposed SSL methods against four baselines on binary and multiclass classification tasks: (1)~\textbf{Sup.}: standard supervised learning (linear risk with $\boldsymbol{a}_i:=\theta_i$); (2)~\textbf{PNU}: the PNU risk estimator of \citet{sakai17semi} with the equal variance assumption (binary only); (3)~\textbf{Pseudo Label (PL)}: self-training with pseudo-labels assigned by confidence thresholding, based on \citet{yaro95uns}; and (4)~\textbf{VAT}: virtual adversarial training \citep{miyato19vat}, which is a representative consistency regularization method. We denote our iterative method for optimizing $g$ and $\boldsymbol{a}$ in this subsection as \textbf{Ours (Iter)}. For multiclass tasks, we additionally report \textbf{Ours~(EC)}, the data-free method under the equal covariance assumption. We assume class priors $\theta_i$ are known. In practice, we can estimate the class priors using methods such as those proposed by \citet{kawakubo16comp,ram16kernel,moreo25kernel,lipton18black}. We use the cross-entropy loss for all methods. Detailed settings are provided in the Appendix.
		
		\paragraph{Datasets.}
		We use six binary and four multiclass classification tasks.
		\emph{Binary tabular}: Adult, Banknote, Breast Cancer, and Credit Default from the UCI Machine Learning Repository \citep{kelly23UCI}.
		\emph{Binary image}: MNIST digits 4 vs.\ 9 (MNIST~4v9) and CIFAR-10 cat vs.\ dog (CIFAR~CvD).
		\emph{Multiclass}: Covertype, Shuttle, and Dry Beans (7 classes), and MNIST (10 classes).
		For binary tasks, labeled samples are specified as $(n_1, n_2) \in \{(15,45), (30,30), (50,150), (100,100)\}$ with $n_U = 300$ (tabular) or $n_U = 5{,}000$ (image).
		For multiclass tasks, we test three regimes of labeled sample sizes---\emph{balanced} (equal samples per class), \emph{mild imbalance}, and \emph{severe imbalance}---with $n_U = 5{,}000$. Please refer to the Appendix for further details. The validation set consists of $60$ samples for all settings, except for the multiclass image task, which uses $200$ samples.
		
		\paragraph{Models and training.}
		We use multi-layer perceptrons (MLPs) for tabular datasets and convolutional neural networks (CNNs) for image datasets. All models are trained using the Adam optimizer. Since these models are complex, we apply non-negative risk correction in Section  \ref{subsec:practicalssl} to our methods and PNU learning.
		
		\subsubsection{Results}
		\paragraph{Binary classification results.}
		Table~\ref{tab:binary} summarizes the binary classification results. Across tabular datasets, risk-rewriting methods (Ours and PNU) consistently outperform Pseudo Label and VAT, often by a large margin. This is expected, as tabular datasets typically do not satisfy the distributional assumptions relied upon by those methods. In almost all settings, our method ranks first or second.
		
		On MNIST~4v9, Pseudo Label benefits from the well-separated cluster structure of the MNIST digits, achieving the best accuracy with very few labels $(n_1{=}15)$. However, on CIFAR-10 cat vs.\ dog, a more challenging task, it fails to yield comparable performance.
		
		\paragraph{Multiclass classification results.}
		Table~\ref{tab:multi_tabular} presents the multiclass results for the tabular datasets and MNIST. On Covertype and Shuttle, our methods \textbf{Ours (Iter)} and \textbf{Ours (EC)} consistently rank first or second, while PL and VAT sometimes underperform compared to standard supervised learning. 
		
		For Dry Beans and MNIST, although PL and VAT show strong performance, our methods achieve better or comparable accuracy to standard supervised learning. In summary, our methods are more robust than PL and VAT, which rely on distributional assumptions.
		
		It is worth highlighting that the data-free \textbf{Ours (EC)} method demonstrates highly competitive accuracy, matching or closely following the iterative optimization approach without requiring a validation set for parameter tuning. 
		
		\section{Conclusions}\label{sec:conclusion}	
		We proposed a generalized framework for distribution-free semi-supervised learning based on risk rewriting. By formulating the set of linear combinations of component risks $S_{lin}$, our framework subsumes existing methods such as PNU learning and naturally extends to multiclass classification.  Experiments confirmed that our methods consistently match or outperform existing approaches. 
		
		PNU learning has had a major impact on subsequent research in SSL and weakly supervised learning\citep{ishida17comple, shimada21pair, hien24anomaly, sakai18auc, hamm20pushift, wang25multi, kato20delayed, tsuchiya21ord, hayashi18mat}. These studies attempt to reduce the variance of risks by simply mixing rewritten risks similarly to PNU learning. By applying our framework, these methods can be formulated using the linear combination set to construct risk estimators with smaller variances, thereby improving learning efficiency. We leave the application of our generalized framework to these broad problem settings for future work.
%
%
		
		\begin{acknowledgements} 
This work was supported in part by JSPS KAKENHI Grant Numbers JP24K14849 and JP26H02491.
		\end{acknowledgements}
		
		\bibliography{ref}
		\newpage
		\onecolumn
		
		\input{appendix}

\end{document}

%% file: appendix.tex
\title{Appendix}
\maketitle

\appendix
\section{Omitted Proofs}
\begin{proof}[Proof of Theorem \ref{thm:rewrite_slin}] We can write $R_{lin}^{\{a_{ij}\}, \{b_j\}}(g)$ as 
	$$\begin{aligned}
		R_{lin}^{\{a_{ij}\}, \{b_j\}}(g) &= \sum_{i, j} a_{i j} R_{i j}(g) + \sum_{j} b_j R_{Uj}(g) \\
		&= \sum_{i, j} a_{i j} R_{i j}(g) + \sum_{j} b_j \left( \sum_{i} \theta_i R_{ij}(g) \right) \\
		&= \sum_{i, j} a_{i j} R_{i j}(g) + \sum_{i, j} \theta_i b_j R_{i j}(g) \\
		&= \sum_{i, j} (a_{i j} + \theta_i b_j) R_{i j}(g).
	\end{aligned}$$
	
	In addition, 
	$$R(g) = \sum_{i=1}^k \theta_i R_{ii}(g) = \sum_{i, j} \delta_{ij} \theta_i R_{ij}(g).$$
	
	Considering $\forall g\in\mathcal{G}, R_{lin}(g) = R(g)$,
	$$\sum_{i, j} \left( a_{i j} + \theta_i b_j - \delta_{ij} \theta_i \right) R_{i j}(g) = 0 \quad (\forall g \in \mathcal{G}).$$
	
	Then, under Assumption \ref{asmp:linindep}, $\forall i, \theta_i = a_{ii} + \theta_i b_i$ and $\forall i \neq j, a_{ij} + \theta_i b_j = 0$. Substituting $b_i=1-\frac{a_{ii}}{\theta_i}$ and $a_{ij}=-\theta_ib_j=\frac{\theta_i}{\theta_j}a_{jj}-\theta_i$ into $R_{lin}^{\{a_{ij}\}, \{b_j\}}(g)$, we derive the objective set.
\end{proof}

\begin{proof}[Proof of Theorem \ref{thm:min_var}]
	The minimum value of the convex quadratic function $f(\boldsymbol{a})$ is given by the algebraic identity:
	\begin{equation}
		f(\boldsymbol{a}^*) = c - \mathbf{b}^T A^{-1} \mathbf{b}
		\label{eq:quad_min}
	\end{equation}
	
	\textbf{Step 1: Simplifying the Term $\mathbf{b}^T A^{-1} \mathbf{b}$}
	First, we express $A$ and $\mathbf{b}$ in terms of $S$ and $Q$.
	$$A = \sum_{m=1}^k \frac{\theta_m^2}{n_m} Q C_m Q = Q \left(\sum_{m=1}^k \frac{\theta_m^2}{n_m} C_m\right) Q = Q S Q$$
	$$\mathbf{b} = \sum_{m=1}^k \frac{\theta_m^2}{n_m} Q C_m \mathbf{d}_m = Q \sum_{m=1}^k \frac{\theta_m^2}{n_m} C_m \mathbf{d}_m$$
	Let $\mathbf{v} := \sum_{m=1}^k \frac{\theta_m^2}{n_m} C_m \mathbf{d}_m$. Then $\mathbf{b} = Q \mathbf{v}$.
	Substituting these into the quadratic term in Eq. \eqref{eq:quad_min}:
	$$
	\begin{aligned}
		\mathbf{b}^T A^{-1} \mathbf{b} &= (Q \mathbf{v})^T (Q S Q)^{-1} (Q \mathbf{v}) \\
		&= \mathbf{v}^T Q (Q^{-1} S^{-1} Q^{-1}) Q \mathbf{v} \\
		&= \mathbf{v}^T S^{-1} \mathbf{v}
	\end{aligned}
	$$
	Next, we simplify $\mathbf{v}$ using $\mathbf{d}_m = \mathbf{1} - \mathbf{e}_m$:
	$$
	\begin{aligned}
		\mathbf{v} &= \sum_{m=1}^k \frac{\theta_m^2}{n_m} C_m (\mathbf{1} - \mathbf{e}_m) \\
		&= \left(\sum_{m=1}^k \frac{\theta_m^2}{n_m} C_m\right)\mathbf{1} - \sum_{m=1}^k \frac{\theta_m^2}{n_m} C_m \mathbf{e}_m \\
		&= S\mathbf{1} - \mathbf{u}
	\end{aligned}
	$$
	Thus, the reduction term is $\mathbf{v}^T S^{-1} \mathbf{v} = (S\mathbf{1}-\mathbf{u})^T S^{-1} (S\mathbf{1}-\mathbf{u})$.
	
	\textbf{Step 2: Expansion of $\mathbf{v}^T S^{-1} \mathbf{v}$}
	Expanding the term derived above:
	$$
	\begin{aligned}
		\mathbf{v}^T S^{-1} \mathbf{v} &= (\mathbf{1}^TS-\mathbf{u}^T) S^{-1} (S\mathbf{1}-\mathbf{u}) \\
		&= \mathbf{1}^T S S^{-1} S\mathbf{1} - \mathbf{u}^T S^{-1} S\mathbf{1} - \mathbf{1}^T S S^{-1} \mathbf{u}+\mathbf{u}^T S^{-1} \mathbf{u}  \\
		&= \mathbf{1}^T S \mathbf{1} - 2\mathbf{1}^T \mathbf{u}+\mathbf{u}^T S^{-1} \mathbf{u}.
	\end{aligned}
	$$
	
	\textbf{Step 3: Expansion of the Constant $c$}
	We expand $c = \sum_{m=1}^k \frac{\theta_m^2}{n_m} \mathbf{d}_m^T C_m \mathbf{d}_m$:
	$$
	\begin{aligned}
		c &= \sum_{m=1}^k \frac{\theta_m^2}{n_m} (\mathbf{1} - \mathbf{e}_m)^T C_m (\mathbf{1} - \mathbf{e}_m) \\
		&= \sum_{m=1}^k \frac{\theta_m^2}{n_m} \left( \mathbf{1}^T C_m \mathbf{1} - 2\mathbf{1}^T C_m \mathbf{e}_m + \mathbf{e}_m^T C_m \mathbf{e}_m \right)\\
		&= \mathbf{1}^T \left(\sum_{m=1}^k \frac{\theta_m^2}{n_m} C_m\right) \mathbf{1} - 2\mathbf{1}^T \left(\sum_{m=1}^k \frac{\theta_m^2}{n_m} C_m \mathbf{e}_m\right) + \sum_{m=1}^k \frac{\theta_m^2}{n_m} \mathbf{e}_m^T C_m \mathbf{e}_m \\
		&= \mathbf{1}^T S \mathbf{1} - 2\mathbf{1}^T \mathbf{u} + \sum_{m=1}^k \frac{\theta_m^2}{n_m} (C_m)_{mm}
	\end{aligned}
	$$
	
	\textbf{Final Calculation}
	Substituting the expanded forms of $c$ and $\mathbf{v}^T S^{-1} \mathbf{v}$ back into Eq. \eqref{eq:quad_min}:
	\begin{align*}
		f(\boldsymbol{a}^*) &= c - \mathbf{v}^T S^{-1} \mathbf{v} \\
		&= \left( \mathbf{1}^T S \mathbf{1} - 2\mathbf{1}^T \mathbf{u} + \sum_{m=1}^k \frac{\theta_m^2}{n_m} (C_m)_{mm} \right) - \left( \mathbf{1}^T S \mathbf{1} - 2\mathbf{1}^T \mathbf{u}+\mathbf{u}^T S^{-1} \mathbf{u}\right)\\
		&=\sum_{m=1}^k \frac{\theta_m^2}{n_m} (C_m)_{mm} - \mathbf{u}^T S^{-1} \mathbf{u}
	\end{align*}
\end{proof}

\begin{proof}[Proof of Theorem \ref{thm:varred}]
	Let us define the weight for class $m$ as $w_m$ and the total weight $W$:$$w_m := \frac{\theta_m^2}{n_m}, \quad W := \sum_{m=1}^k w_m = \sum_{m=1}^k \frac{\theta_m^2}{n_m}.$$ We also define a weight vector $\mathbf{w} \in \mathbb{R}^k$ as $\mathbf{w} = [w_1, w_2, \dots, w_k]^T$. Under the assumption that $C_m = C$ for all $m$, the total weighted covariance matrix $S$  simplifies to:
	$$S = \sum_{m=1}^k w_m C_m = \left(\sum_{m=1}^k w_m\right) C = W C.$$ Similarly, the weighted covariance vector $\mathbf{u}$  simplifies to
	$$\mathbf{u} = \sum_{m=1}^k w_m C_m \mathbf{e}_m = C \sum_{m=1}^k w_m \mathbf{e}_m = C \mathbf{w}.$$ 
	Then we can write 
	\begin{align*}
		\mathbf{u}^T S^{-1} \mathbf{u} &= (C \mathbf{w})^T (W C)^{-1} (C \mathbf{w})= \mathbf{w}^T C \frac{1}{W} C^{-1} C \mathbf{w}\\ &= \frac{1}{W} \mathbf{w}^T C \mathbf{w}.
	\end{align*}
	Under the assumption that $C$ has diagonal elements $\rho_1$ and off-diagonal elements $\rho_2$, we have$$C = (\rho_1 - \rho_2)I + \rho_2 \mathbf{1}\mathbf{1}^T.$$
	
	Evaluating the quadratic form $\mathbf{w}^T C \mathbf{w}$ yields\begin{align*}\mathbf{w}^T C \mathbf{w} &= \mathbf{w}^T \left( (\rho_1 - \rho_2)I + \rho_2 \mathbf{1}\mathbf{1}^T \right) \mathbf{w}= (\rho_1 - \rho_2) |\mathbf{w}|^2 + \rho_2 (\mathbf{w}^T \mathbf{1})^2 \\&= (\rho_1 - \rho_2) \sum_{m=1}^k w_m^2 + \rho_2 W^2,\end{align*}
	This completes the proof of the equality. To derive the upper bound, we note that $\sum_{m=1}^k w_m^2 \leq (\max_{m} w_m) \sum_{m=1}^k w_m = W \max_{m} w_m$.
	Since $\rho_1 > \rho_2$, we have
	\begin{align*}
		\mathbf{u}^\top S^{-1} \mathbf{u} &\leq (\rho_1 - \rho_2) \frac{W \max_{m} w_m}{W} + \rho_2 W \\
		&= \rho_1 \max_{m} w_m - \rho_2 \max_{m} w_m + \rho_2 W \\
		&= \rho_1 \max_{m \in [k]} w_m + \rho_2 \left( W - \max_{m \in [k]} w_m \right).
	\end{align*}
\end{proof}

\begin{proof}[Proof of Theorem \ref{thm:variance_dominance}]
	\textbf{Dominance inequality \eqref{eq:var_com}:}
	Recall that 
	\begin{align*}
		R_{\text{PNPU}}^{\eta}(g) &= (1-\eta)(\theta_1 R_{11} + \theta_2 R_{22}) + \eta(\theta_1 R_{11} - \theta_1 R_{12} + R_{U2}) \\
		&= \theta_1 R_{11} + (1-\eta)\theta_2 R_{22} -\eta\theta_1R_{12}+\eta R_{U2}.
	\end{align*}
	This risk is equivalent to $R_{lin}^{\boldsymbol{a}}(g)$ where $(a_1,a_2)=(\theta_1,\theta_2(1-\eta))$. Similarly, 
	\begin{align*}
		R_{\text{PNNU}}^{\eta}(g) &= (1-\eta)(\theta_1 R_{11} + \theta_2 R_{22}) + \eta(R_{U1}-\theta_2 R_{21} + \theta_2 R_{22}) \\
		&= (1-\eta)\theta_1 R_{11} + \theta_2 R_{22} -\eta\theta_2R_{21}+\eta R_{U1},
	\end{align*}
	which is equivalent to $R_{lin}^{\boldsymbol{a}}(g)$ where $(a_1,a_2)=(\theta_1(1-\eta),\theta_2)$.
	
	The sets of estimators for PNPU and PNNU learning are subsets of the generalized linear estimator set $S_{lin}$. Thus, Eq. \eqref{eq:var_com} holds.
	
	\textbf{Derivation of PNPU Minimum Variance:}
	We explicitly derive the minimum variance for the PNPU estimator in the binary setting ($k=2$). The variance of the estimator $\hat{R}_{\text{PNPU}}^{\eta}$ when $n_U\rightarrow\infty$ is given by:
	\begin{align*}
		\operatorname{Var}(\hat{R}_{\text{PNPU}}^{\eta}) &= \operatorname{Var}\left( \theta_1 \hat{R}_{11} - \eta \theta_1 \hat{R}_{12} + (1-\eta) \theta_2 \hat{R}_{22} \right) \\
		&= \theta_1^2 \operatorname{Var}(\hat{R}_{11} - \eta \hat{R}_{12}) + (1-\eta)^2 \theta_2^2 \operatorname{Var}(\hat{R}_{22}).
	\end{align*}
	
	Assuming $C_1=C_2=C$ with diagonal elements $\rho_1$ and off-diagonal elements $\rho_2$:
	\begin{align*}
		\operatorname{Var}(\hat{R}_{11} - \eta \hat{R}_{12}) &= \operatorname{Var}(\hat{R}_{11}) - 2\eta \operatorname{Cov}(\hat{R}_{11}, \hat{R}_{12}) + \eta^2 \operatorname{Var}(\hat{R}_{12}) \\
		&= \frac{\rho_1}{n_1} - \frac{2\eta \rho_2}{n_1} + \frac{\eta^2 \rho_1}{n_1}.
	\end{align*}
	Substituting this back into the total variance expression and using $w_i = \theta_i^2/n_i$:
	\begin{align*}
		\operatorname{Var}(\hat{R}_{\text{PNPU}}^{\eta}) &= \theta_1^2 \left( \frac{\rho_1 - 2\eta \rho_2 + \eta^2 \rho_1}{n_1} \right) + (1-\eta)^2 \theta_2^2 \frac{\rho_1}{n_2} \\
		&= w_1 (\rho_1 - 2\eta \rho_2 + \eta^2 \rho_1) + w_2 (1 - 2\eta + \eta^2) \rho_1\\
		&= (w_1 + w_2)\rho_1 \eta^2 -2(w_1 \rho_2 + w_2 \rho_1) \eta + (w_1 + w_2)\rho_1 .
	\end{align*}
	
	Then, 
	\begin{align*}
		\operatorname{Var}(\hat{R}_{\text{PNPU}}^{\eta^*})&= (w_1 + w_2)\rho_1 - \frac{\left(-2(w_1 \rho_2 + w_2 \rho_1)\right)^2}{4(w_1 + w_2)\rho_1}\\
		&= (w_1 + w_2)\rho_1 - \frac{4(w_1 \rho_2 + w_2 \rho_1)^2}{4(w_1 + w_2)\rho_1}\\
		&= \frac{(w_1 + w_2)^2 \rho_1^2 - (w_1 \rho_2 + w_2 \rho_1)^2}{\rho_1(w_1 + w_2)}\\
		&=\frac{w_1^2 (\rho_1^2 - \rho_2^2) + 2 w_1 w_2 \rho_1 (\rho_1 - \rho_2)}{\rho_1 (w_1 + w_2)}.
	\end{align*}

	\textbf{Variance gap \eqref{eq:var_comval}:}
	Using the result from Theorem \ref{thm:varred} for $k=2$, the minimum variance of our proposed estimator is:
	\begin{align*}
		\operatorname{Var}(\hat{R}_{lin}^{\boldsymbol{a}^*}) &= \sum_{m=1}^2 w_m \rho_1 - \left[ (\rho_1 - \rho_2) \frac{w_1^2 + w_2^2}{W} + \rho_2 W \right] \\
		&= \rho_1 W - \rho_2 W - (\rho_1 - \rho_2) \frac{w_1^2 + w_2^2}{W} \\
		&= (\rho_1 - \rho_2) \left( W - \frac{w_1^2 + w_2^2}{W} \right) \\
		&= (\rho_1 - \rho_2) \frac{(w_1 + w_2)^2 - (w_1^2 + w_2^2)}{W} \\
		&= (\rho_1 - \rho_2) \frac{2 w_1 w_2}{w_1 + w_2}.
	\end{align*}
	We compute the gap $\operatorname{Var}(\hat{R}_{\text{PNPU}}^{\eta^*_{PU}}) - \operatorname{Var}(\hat{R}_{lin}^{\boldsymbol{a}^*})$:
	\begin{align*}
		\text{Gap} &= \frac{w_1^2 (\rho_1 - \rho_2)(\rho_1 + \rho_2) + 2 w_1 w_2 \rho_1 (\rho_1 - \rho_2)}{\rho_1 W} - \frac{2 w_1 w_2 (\rho_1 - \rho_2)}{W} \\
		&= \frac{(\rho_1 - \rho_2)}{W} \left[ \frac{w_1^2 (\rho_1 + \rho_2) + 2 w_1 w_2 \rho_1}{\rho_1} - 2 w_1 w_2 \right] \\
		&= \frac{(\rho_1 - \rho_2)}{W} \left[ \frac{w_1^2 (\rho_1 + \rho_2)}{\rho_1} + 2 w_1 w_2 - 2 w_1 w_2 \right] \\
		&= \frac{w_1^2 (\rho_1^2 - \rho_2^2)}{(w_1 + w_2)\rho_1}.
	\end{align*}
	
	By symmetry, the gap for PNNU is $\frac{w_2^2 (\rho_1^2 - \rho_2^2)}{(w_1 + w_2)\rho_1}$. The result follows.
\end{proof}

\begin{proof}[Proof of Theorem \ref{thm:rewrite_slin_sym}]
	From the definition of symmetric loss $\sum_{j=1}^k R_{ij} = \alpha$, the following holds:
	\begin{equation}
		R_{ik} = \alpha - \sum_{j=1}^{k-1} R_{ij}, \quad R_{Uk} = \alpha - \sum_{j=1}^{k-1} R_{Uj}
	\end{equation}
	
	Substituting the above expressions into $R_{lin}(g)$, we derive
	\begin{align*}
		R_{lin}^{\{a_{ij}\}, \{b_j\}}&= \sum_{i,j} a_{ij} R_{ij}(g) + \sum_{j} b_j R_{Uj} \\
		&= \sum_{i} \left[ \sum_{j=1}^{k-1} a_{ij} R_{ij} + a_{ik} \left( \alpha - \sum_{j=1}^{k-1} R_{ij} \right) \right] + \sum_{j=1}^{k-1} b_j R_{Uj} + b_k \left( \alpha - \sum_{j=1}^{k-1} R_{Uj} \right) \\
		&= \sum_{i} \sum_{j=1}^{k-1} \underbrace{(a_{ij} - a_{ik})}_{\tilde{a}_{ij}} R_{ij} + \sum_{j=1}^{k-1} \underbrace{(b_j - b_k)}_{\tilde{b}_{j}} R_{Uj} + \alpha\left(\sum_{i} a_{ik} + b_k\right)
	\end{align*}
	
	where we define $\tilde{a}_{ij}:=a_{ij} - a_{ik}$ and $\tilde{b}_j:=b_j - b_k$. Using $R_{Uj} = \sum_{i} \theta_i R_{ij}$, the second term can be rewritten as:
	\begin{equation}
		\sum_{j=1}^{k-1} (b_j - b_k) R_{Uj} = \sum_{j=1}^{k-1} (b_j - b_k) \sum_{i} \theta_i R_{ij} = \sum_{i} \sum_{j=1}^{k-1} (b_j - b_k) \theta_i R_{ij}
	\end{equation}

	Substituting these back into $R_{lin}$ and rearranging:
	\begin{equation}
		R_{lin}^{\{a_{ij}\}, \{b_j\}} = \sum_{i} \sum_{j=1}^{k-1} \left( a_{ij} - a_{ik} + (b_j - b_k)\theta_i \right) R_{ij} + \alpha \left( \sum_{i} a_{ik} + b_k \right)
	\end{equation}
	
	On the other hand, the true risk $R(g)$ can be written as follows (expanding $R_{ik}$):
	\begin{align*}
		R(g) &= \sum_{i} \theta_i R_{ii} = \sum_{i=1}^{k-1} \theta_i R_{ii} + \theta_k R_{kk} \\
		&= \sum_{i=1}^{k-1} \theta_i R_{ii} + \theta_k \left( \alpha - \sum_{j=1}^{k-1} R_{kj} \right)
	\end{align*}

	Under Assumption \ref{asmp:linindep_sym}, we derive the conditions for the identity $\forall g\in\mathcal{G}, R_{lin}(g) = R(g)$ to hold :
	
	\begin{equation}
		\begin{cases}
			\forall i, j \in [k-1], & a_{ij} - a_{ik} + (b_j - b_k)\theta_i - \delta_{ij}\theta_i = 0 \quad (\text{Coeff. of } R_{ij}) \\
			\forall j \in [k-1], & a_{kj} - a_{kk} + (b_j - b_k)\theta_k + \theta_k = 0 \quad (\text{Coeff. of } R_{kj}) \\
			& \sum_{i} a_{ik} + b_k -  \theta_k = 0 \quad (\text{Constant term})
		\end{cases}
	\end{equation}
	
	Suppose we are given arbitrary $\tilde{a}_{ij} \in \mathbb{R}$ and $\tilde{b}_j \in \mathbb{R}$ for all $i \in [k]$ and $j \in [k-1]$ that satisfy the above equations. There are values of $a_{ij}, b_j$ that satisfy the above equations and $\tilde{a}_{ij}=a_{ij} - a_{ik}$ and $\tilde{b}_j=b_j - b_k$. Thus, we can simplify $R_{lin}^{\{a_{ij}\}, \{b_j\}}(g)$ as
	\begin{equation}
		R_{lin}(g) = \sum_{i} \sum_{j}^{k-1} \tilde{a}_{ij} R_{ij} + \sum_{j}^{k-1} \tilde{b}_j R_{Uj} + \alpha \theta_k, \quad \text{where } \tilde{a}_{ij}, \tilde{b}_j \in \mathbb{R}
	\end{equation}
	
	The conditions are:
	\begin{enumerate}
		\item $\tilde{a}_{ij} + \tilde{b}_j \theta_i - \delta_{ij}\theta_i = 0 \quad \forall i, j \in [k-1]$
		\item $\tilde{a}_{kj} + \tilde{b}_j \theta_k + \theta_k = 0 \quad \forall j \in [k-1]$
	\end{enumerate}
	
	Solving these yields:
	\begin{align}
		\forall i \in [k-1], \quad & \tilde{a}_{ii} + \tilde{b}_i \theta_i - \theta_i = 0 \nonumber \\
		\Rightarrow \quad & \tilde{b}_i = 1 - \frac{\tilde{a}_{ii}}{\theta_i}
	\end{align}
	
	Also, when $i \neq j$, $\tilde{a}_{ij} + \tilde{b}_j \theta_i = 0$. From these, we get:
	\begin{align}
		\forall i,j \in [k-1] \text{ s.t. } i\neq j, \quad \tilde{a}_{ij} &=-\theta_i \tilde{b}_j \nonumber \\
		&= -\theta_i \left( 1 - \frac{\tilde{a}_{jj}}{\theta_j} \right)
	\end{align}
	
	Furthermore, for the $k$-th class:
	\begin{align}
		\tilde{a}_{kj} &= -\theta_k (1 + \tilde{b}_j) \nonumber \\
		&= -\theta_k \left( 2 - \frac{\tilde{a}_{jj}}{\theta_j} \right)
	\end{align}
	
	Substituting the above equations into $R_{lin}(g)$, we derive the target set.
\end{proof}

\begin{proof}[Proof of Theorem \ref{thm:var_sym}]
	
	We derive the variance of the estimator $\hat{R}_{lin}^{\boldsymbol{a}}$ under the symmetric loss assumption. Recall that the estimator is given by:
	\begin{equation}
		\hat{R}^{\boldsymbol{a}}_{lin} = \sum^k_{i=1}\theta_i \hat{R}_{ii} + \sum^k_{i=1}\sum^{k-1}_{j=1}\theta_i\left(\frac{a_j}{\theta_j}-1\right)\hat{R}_{ij} - \sum^{k-1}_{j=1} \left(\frac{a_j}{\theta_j}-1\right)\hat{R}_{Uj}.
	\end{equation}
	In the asymptotic regime where $n_U \to \infty$, the variance of the unlabeled component vanishes. Thus, we focus on the labeled components. Since the labeled datasets $\mathcal{X}_m$ for $m \in \{1, \dots, k\}$ are independent, the total variance is the sum of the variances contributed by each class. We define $T_m$ as the terms in $\hat{R}^{\boldsymbol{a}}_{lin}$ dependent on $\mathcal{X}_m$:
	\begin{equation}
		T_m = \theta_m \hat{R}_{mm} + \sum_{j=1}^{k-1} \theta_m \left(\frac{a_j}{\theta_j}-1\right) \hat{R}_{mj}.
	\end{equation}
	Let $\check{\mathbf{R}}_m = [\hat{R}_{m1}, \dots, \hat{R}_{m, k-1}]^\top$ be the vector of component risks for class $m$ restricted to the first $k-1$ labels. The covariance matrix of $\check{\mathbf{R}}_m$ is $\frac{1}{n_m}\check{C}_m$. We express $T_m$ as a linear combination $\mathbf{c}_m^\top \check{\mathbf{R}}_m + \text{const}$ and determine the coefficient vector $\mathbf{c}_m \in \mathbb{R}^{k-1}$.
	
	\textbf{Case 1: $m < k$.}
	For $m \in \{1, \dots, k-1\}$, the term $\hat{R}_{mm}$ is explicitly contained in the sum $\sum_{j=1}^{k-1}$.
	The coefficient for $\hat{R}_{mj}$ where $j \neq m$ is $\theta_m (a_j/\theta_j - 1)$.
	The coefficient for $\hat{R}_{mm}$ combines the first term and the summation term: $\theta_m + \theta_m (a_m/\theta_m - 1) = a_m$.
	Thus, the $j$-th element of $\mathbf{c}_m$ is:
	\begin{equation}
		(\mathbf{c}_m)_j = \begin{cases} \theta_m \left(\frac{a_j}{\theta_j} - 1\right) & j \neq m \\ \theta_m \left(\frac{a_m}{\theta_m}\right) & j = m \end{cases}.
	\end{equation}
	Using the diagonal matrix $\check{Q} = \operatorname{diag}(1/\theta_1, \dots, 1/\theta_{k-1})$ and vector $\check{\mathbf{d}}_m = \mathbf{1} - \mathbf{e}_m$, we can write this compactly as:
	\begin{equation}
		\mathbf{c}_m = \theta_m (\check{Q}\boldsymbol{a} - \check{\mathbf{d}}_m).
	\end{equation}
	
	\textbf{Case 2: $m = k$.}
	For class $k$, $\hat{R}_{kk}$ is not included in $\check{\mathbf{R}}_k$. We invoke the symmetric loss property $\sum_{j=1}^k \hat{R}_{kj} = \alpha$ (constant) to substitute $\hat{R}_{kk} = \alpha - \sum_{j=1}^{k-1} \hat{R}_{kj}$.
	Substituting this into $T_k$:
	\begin{align}
		T_k &= \theta_k \left(\alpha - \sum_{j=1}^{k-1} \hat{R}_{kj}\right) + \sum_{j=1}^{k-1} \theta_k \left(\frac{a_j}{\theta_j}-1\right) \hat{R}_{kj} \\
		&= \text{const} + \sum_{j=1}^{k-1} \left[ -\theta_k + \theta_k \left(\frac{a_j}{\theta_j}-1\right) \right] \hat{R}_{kj} \\
		&= \text{const} + \sum_{j=1}^{k-1} \theta_k \left(\frac{a_j}{\theta_j}-2\right) \hat{R}_{kj}.
	\end{align}
	Here, the coefficient vector corresponds to $\check{\mathbf{d}}_k = 2\mathbf{1}$. Thus:
	\begin{equation}
		\mathbf{c}_k = \theta_k (\check{Q}\boldsymbol{a} - 2\mathbf{1}) = \theta_k (\check{Q}\boldsymbol{a} - \check{\mathbf{d}}_k).
	\end{equation}
	
	\textbf{Total Variance.}
	The variance of the estimator is the sum of variances of $T_m$:
	\begin{align}
		\operatorname{Var}\left[\hat{R}_{lin}^{\boldsymbol{a}}\right] &= \sum_{m=1}^k \operatorname{Var}[T_m] = \sum_{m=1}^k \mathbf{c}_m^\top \left( \frac{1}{n_m} \check{C}_m \right) \mathbf{c}_m \\
		&= \sum_{m=1}^k \frac{\theta_m^2}{n_m} (\check{Q}\boldsymbol{a} - \check{\mathbf{d}}_m)^\top \check{C}_m (\check{Q}\boldsymbol{a} - \check{\mathbf{d}}_m).
	\end{align}
\end{proof}

\begin{proof}[Proof of Theorem \ref{thm:min_var_sym}]
	First, we simplify the reduction term $\check{\mathbf{b}}^T \check{A}^{-1} \check{\mathbf{b}}$. Substituting $\check{A}^{-1} = \check{Q}^{-1} \check{S}^{-1} \check{Q}^{-1}$ and $\check{\mathbf{b}} = \check{Q}(\check{S}\mathbf{1} - \check{\mathbf{u}})$:
	\begin{align*}
		\check{\mathbf{b}}^T \check{A}^{-1} \check{\mathbf{b}} &= (\check{S}\mathbf{1} - \check{\mathbf{u}})^T \check{S}^{-1} (\check{S}\mathbf{1} - \check{\mathbf{u}}) \\
		&= (\mathbf{1}^T \check{S} - \check{\mathbf{u}}^T) (\mathbf{1} - \check{S}^{-1}\check{\mathbf{u}}) \\
		&= \mathbf{1}^T \check{S} \mathbf{1} - 2\mathbf{1}^T \check{\mathbf{u}} + \check{\mathbf{u}}^T \check{S}^{-1} \check{\mathbf{u}}.
	\end{align*}
	
	Next, we expand the constant term $\check{c}$. Recalling $\check{\mathbf{d}}_m = \mathbf{1} - \mathbf{e}_m$ for $m < k$ and $\check{\mathbf{d}}_k = 2\mathbf{1}$:
	\begin{align*}
		\check{c} &= \sum_{m=1}^{k-1} \frac{\theta_m^2}{n_m} (\mathbf{1} - \mathbf{e}_m)^T \check{C}_m (\mathbf{1} - \mathbf{e}_m) + \frac{\theta_k^2}{n_k} (2\mathbf{1})^T \check{C}_k (2\mathbf{1}) \\
		&= \sum_{m=1}^{k-1} \frac{\theta_m^2}{n_m} \left( \mathbf{1}^T \check{C}_m \mathbf{1} - 2\mathbf{1}^T \check{C}_m \mathbf{e}_m + (\check{C}_m)_{mm} \right) + 4\frac{\theta_k^2}{n_k} \mathbf{1}^T \check{C}_k \mathbf{1}.
	\end{align*}
	We regroup the terms using $\check{S} = \sum_{m=1}^{k-1} \frac{\theta_m^2}{n_m} \check{C}_m + \frac{\theta_k^2}{n_k} \check{C}_k$:
	\begin{itemize}
		\item Quadratic terms in $\mathbf{1}$: 
		$$ \mathbf{1}^T \left(\sum_{m=1}^{k-1} \frac{\theta_m^2}{n_m} \check{C}_m\right) \mathbf{1} + 4\frac{\theta_k^2}{n_k} \mathbf{1}^T \check{C}_k \mathbf{1} = \mathbf{1}^T (\check{S} - \frac{\theta_k^2}{n_k} \check{C}_k) \mathbf{1} + 4\frac{\theta_k^2}{n_k} \mathbf{1}^T \check{C}_k \mathbf{1} = \mathbf{1}^T \check{S} \mathbf{1} + 3\frac{\theta_k^2}{n_k} \mathbf{1}^T \check{C}_k \mathbf{1}.$$
		\item Linear terms in $\mathbf{1}$: 
		$$ -2 \mathbf{1}^T \sum_{m=1}^{k-1} \frac{\theta_m^2}{n_m} \check{C}_m \mathbf{e}_m = -2 \mathbf{1}^T (\check{\mathbf{u}} + \frac{\theta_k^2}{n_k} \check{C}_k \mathbf{1}) = -2 \mathbf{1}^T \check{\mathbf{u}} - 2\frac{\theta_k^2}{n_k} \mathbf{1}^T \check{C}_k \mathbf{1}.$$
		\item Remaining diagonal terms: $\sum_{m=1}^{k-1} \frac{\theta_m^2}{n_m} (\check{C}_m)_{mm}$.
	\end{itemize}
	Thus:
	\begin{align*}
		\check{c} &= \mathbf{1}^T \check{S} \mathbf{1} - 2 \mathbf{1}^T \check{\mathbf{u}} + \left( \sum_{m=1}^{k-1} \frac{\theta_m^2}{n_m} (\check{C}_m)_{mm} + \frac{\theta_k^2}{n_k} \mathbf{1}^T \check{C}_k \mathbf{1} \right)\\
		&=\mathbf{1}^T \check{S} \mathbf{1} - 2 \mathbf{1}^T \check{\mathbf{u}} + \sum_{m=1}^{k} \frac{\theta_m^2}{n_m} (C_m)_{mm}
	\end{align*}
	Subtracting the reduction term from $\check{c}$ cancels the $\mathbf{1}^T \check{S} \mathbf{1}$ and linear terms, yielding the result.
\end{proof}

\begin{proof}[Proof of Theorem \ref{thm:equiv}]
	First, we derive the explicit forms of the risks under the symmetric loss assumption. 
	Recall the symmetric condition implies $R_{12} = \alpha - R_{11}$ and $R_{21} = \alpha - R_{22}$.
	Substituting these into the definitions of PU and NU risks, we obtain:
	\begin{align}
		R_{\text{PU}}(g) &= \theta_1 R_{11}(g) - \theta_1 (\alpha - R_{11}(g)) + R_{U2}(g) \nonumber \\
		&= 2\theta_1 R_{11}(g) + R_{U2}(g) - \theta_1 \alpha, \label{eq:R_PU_sym} \\
		R_{\text{NU}}(g) &= \theta_2 R_{22}(g) - \theta_2 (\alpha - R_{22}(g)) + R_{U1}(g) \nonumber \\
		&= 2\theta_2 R_{22}(g) + R_{U1}(g) - \theta_2 \alpha. \label{eq:R_NU_sym}
	\end{align}
	
	Next, we consider the PNPU estimator $R_{\text{PNPU}}^{\eta} = (1-\eta)R_{\text{PN}} + \eta R_{\text{PU}}$. Substituting \eqref{eq:R_PU_sym}:
	\begin{align}
		R_{\text{PNPU}}^{\eta}(g) &= (1-\eta)(\theta_1 R_{11} + \theta_2 R_{22}) + \eta (2\theta_1 R_{11} + R_{U2} - \theta_1 \alpha) \nonumber \\
		&= \theta_1(1+\eta) R_{11} + \theta_2(1-\eta) R_{22} + \eta R_{U2} - \eta \theta_1 \alpha. \label{eq:PNPU_form}
	\end{align}
	Similarly, for the PNNU estimator $R_{\text{PNNU}}^{\eta} = (1-\eta)R_{\text{PN}} + \eta R_{\text{NU}}$, using \eqref{eq:R_NU_sym}:
	\begin{align}
		R_{\text{PNNU}}^{\eta}(g) &= (1-\eta)(\theta_1 R_{11} + \theta_2 R_{22}) + \eta (2\theta_2 R_{22} + R_{U1} - \theta_2 \alpha) \nonumber \\
		&= \theta_1(1-\eta) R_{11} + \theta_2(1+\eta) R_{22} + \eta R_{U1} - \eta \theta_2 \alpha. \label{eq:PNNU_form}
	\end{align}
	
	Now, we examine the general linear estimator set $S_{lin}$ for $k=2$ with symmetric loss. From Theorem \ref{thm:rewrite_slin_sym}, the estimator is parametrized by a scalar $a_1$:
	\begin{equation}
		R_{lin}^{a_1}(g) = a_1 R_{11} + \theta_2 R_{22} + \theta_2 \left(\frac{a_1}{\theta_1}-1\right) R_{21} - \left(\frac{a_1}{\theta_1}-1\right) R_{U1}.
	\end{equation}
	Using the identity $R_{21} = \alpha - R_{22}$ and $R_{U1} + R_{U2} = \alpha$ to align terms with \eqref{eq:PNPU_form}, we rewrite $R_{lin}^{a_1}$:
	\begin{align*}
		R_{lin}^{a_1}(g) &= a_1 R_{11} + \theta_2 R_{22} + \theta_2 \left(\frac{a_1}{\theta_1}-1\right) (\alpha - R_{22}) - \left(\frac{a_1}{\theta_1}-1\right) (\alpha - R_{U2}) \\
		&= a_1 R_{11} + \left[ \theta_2 - \theta_2\left(\frac{a_1}{\theta_1}-1\right) \right] R_{22} + \left(\frac{a_1}{\theta_1}-1\right) R_{U2} + \text{const.} \\
		&= a_1 R_{11} + \theta_2 \left( 2 - \frac{a_1}{\theta_1} \right) R_{22} + \left(\frac{a_1}{\theta_1}-1\right) R_{U2} + \text{const.}
	\end{align*}
	Comparing this with the PNPU form in \eqref{eq:PNPU_form}, we see that they match if we set $a_1 = \theta_1(1+\eta)$. Specifically:
	\begin{itemize}
		\item Coefficient of $R_{11}$: $a_1 \iff \theta_1(1+\eta)$
		\item Coefficient of $R_{22}$: $\theta_2(2 - (1+\eta)) = \theta_2(1-\eta)$
		\item Coefficient of $R_{U2}$: $(1+\eta) - 1 = \eta$
	\end{itemize}
	Since this mapping is bijective for $\eta, a_1 \in \mathbb{R}$, we have $S_{lin} = S_{\text{PNPU}}$.
	
	Similarly, comparing $R_{lin}^{a_1}$ with PNNU form in \eqref{eq:PNNU_form} by setting $a_1 = \theta_1(1-\eta)$ shows that $S_{lin} = S_{\text{PNNU}}$. Thus, all three sets are equivalent.
\end{proof}

\begin{proof}[Proof of Theorem \ref{thm:gen_bound}]
	We consider a hypothesis class $\mathcal{G}$ and its minimal $\nu$-cover $C_\nu$ with respect to the $L_\infty$ norm. By definition, $|C_\nu| = \mathcal{N}(\mathcal{G}, \nu, \|\cdot\|_\infty)$, and for any $g \in \mathcal{G}$, there exists a $g' \in C_\nu$ such that $\|g - g'\|_\infty \leq \nu$.
	
	Let $\hat{g}$ be the empirical risk minimizer and $g^*$ be the true risk minimizer. As $n_U \to \infty$, the variance of the unlabeled risk components vanishes, so we focus on the empirical estimates derived from the labeled datasets.
	
	\textbf{1. Decomposition of Excess Risk and Covering Bound}
	First, we relate the excess risk to the maximum estimation error over $\mathcal{G}$:
	\begin{align}
		R(\hat{g}) - R(g^*) &= R(\hat{g}) - \hat{R}_{lin}^{\boldsymbol{a}}(\hat{g}) + \hat{R}_{lin}^{\boldsymbol{a}}(\hat{g}) - \hat{R}_{lin}^{\boldsymbol{a}}(g^*) + \hat{R}_{lin}^{\boldsymbol{a}}(g^*) - R(g^*) \nonumber \\
		&\leq R(\hat{g}) - \hat{R}_{lin}^{\boldsymbol{a}}(\hat{g}) + \hat{R}_{lin}^{\boldsymbol{a}}(g^*) - R(g^*) \quad (\text{since } \hat{R}_{lin}^{\boldsymbol{a}}(\hat{g}) \leq \hat{R}_{lin}^{\boldsymbol{a}}(g^*)) \nonumber \\
		&\leq 2 \sup_{g \in \mathcal{G}} |\hat{R}_{lin}^{\boldsymbol{a}}(g) - R(g)|.
	\end{align}
	
	Using the $\nu$-cover $C_\nu$, we can bound the deviation for any $g \in \mathcal{G}$ by mapping it to its closest element $g' \in C_\nu$. Because the loss function is $L$-Lipschitz continuous, we have $|l(g(x), j) - l(g'(x), j)| \leq L\|g(x) - g'(x)\|_\infty \leq L\nu$. 
	
	For the empirical risk estimator, the difference is bounded by:
	\begin{align*}
		|\hat{R}_{lin}^{\boldsymbol{a}}(g) - \hat{R}_{lin}^{\boldsymbol{a}}(g')| &\leq \sum_{m=1}^k \frac{\theta_m}{n_m} \sum_{q=1}^{n_m} \left( |l(g(x_q^m), m) - l(g'(x_q^m), m)| + \sum_{j=1}^k \left|\frac{a_j}{\theta_j} - 1\right| |l(g(x_q^m), j) - l(g'(x_q^m), j)| \right) \\
		&\leq \sum_{m=1}^k \theta_m \left( L\nu + k c_{\boldsymbol{a}} L\nu \right) = L\nu(1 + k c_{\boldsymbol{a}}) = L_{\boldsymbol{a}}\nu.
	\end{align*}
	
	Similarly, for the true risk, $|R(g) - R(g')| \leq \mathbb{E}[|l(g(x), y) - l(g'(x), y)|] \leq L\nu \leq L_{\boldsymbol{a}}\nu$.
	Thus, using the triangle inequality for any fixed $g \in \mathcal{G}$ and its closest element $g' \in C_\nu$ (where $\|g - g'\|_\infty \leq \nu$), we have:
	\begin{align}
		|\hat{R}_{lin}^{\boldsymbol{a}}(g) - R(g)| &\leq |\hat{R}_{lin}^{\boldsymbol{a}}(g) - \hat{R}_{lin}^{\boldsymbol{a}}(g')| + |\hat{R}_{lin}^{\boldsymbol{a}}(g') - R(g')| + |R(g') - R(g)| \nonumber \\
		&\leq 2L_{\boldsymbol{a}}\nu + |\hat{R}_{lin}^{\boldsymbol{a}}(g') - R(g')| \nonumber \\
		&\leq 2L_{\boldsymbol{a}}\nu + \max_{g'' \in C_\nu} |\hat{R}_{lin}^{\boldsymbol{a}}(g'') - R(g'')|.
	\end{align}
	
	Since the right-hand side is independent of the choice of $g$, we can now take the supremum over all $g \in \mathcal{G}$ on the left-hand side:
	\begin{equation}
		\sup_{g \in \mathcal{G}} |\hat{R}_{lin}^{\boldsymbol{a}}(g) - R(g)| \leq \max_{g' \in C_\nu} |\hat{R}_{lin}^{\boldsymbol{a}}(g') - R(g')| + 2L_{\boldsymbol{a}}\nu.
		\label{eq:cover_bound}
	\end{equation}
	
	\textbf{2. Bernstein's Inequality on the Finite Cover}
	We now bound the estimation error for the finite set of hypotheses in $C_\nu$. For any fixed $g' \in C_\nu$, the estimator $\hat{R}_{lin}^{\boldsymbol{a}}(g')$ is a sum of independent random variables:
	\begin{equation}
		\hat{R}^a_{lin}(g') = \sum_{m=1}^k \sum_{q=1}^{n_m} \underbrace{\frac{\theta_m}{n_m} \left( l(g'(x_q^m), m) + \sum_{j=1}^k \left( \frac{a_j}{\theta_j} - 1 \right) l(g'(x_q^m), j) \right)}_{Z_q^m(g')} + \text{const}.
	\end{equation}
	Since $0 \le l(\cdot, \cdot) \le c_l$, the variables $Z_q^m(g')$ are bounded by $\max_{m\in[k]}\left(\frac{\theta_m}{n_m}\right) k c_l(1+c_{\boldsymbol{a}}) = \max_{m\in[k]}\left(\frac{\theta_m}{n_m}\right) B_{\boldsymbol{a}}$.
	
	First, for any fixed $g' \in C_\nu$, applying Bernstein's inequality yields:
	\begin{equation}
		P\left( |\hat{R}_{lin}^{\boldsymbol{a}}(g') - R(g')| > \epsilon \right) \leq 2 \exp\left( \frac{-\epsilon^2}{2 \operatorname{Var}[\hat{R}_{lin}^{\boldsymbol{a}}(g')] + \frac{2}{3} \max_{m\in[k]}\left(\frac{\theta_m}{n_m}\right) B_{\boldsymbol{a}}\epsilon} \right).
	\end{equation}
	Since $\operatorname{Var}[\hat{R}_{lin}^{\boldsymbol{a}}(g')] \leq \sigma_{\max}^2(\boldsymbol{a}, \nu)$ for all $g' \in C_\nu$, we can upper bound this probability by:
	\begin{equation}
		P\left( |\hat{R}_{lin}^{\boldsymbol{a}}(g') - R(g')| > \epsilon \right) \leq 2 \exp\left( \frac{-\epsilon^2}{2 \sigma_{\max}^2(\boldsymbol{a}, \nu) + \frac{2}{3} \max_{m\in[k]}\left(\frac{\theta_m}{n_m}\right) B_{\boldsymbol{a}}\epsilon} \right).
	\end{equation}
	
	Next, applying the union bound over all $|C_\nu| = \mathcal{N}(\mathcal{G}, \nu, \|\cdot\|_\infty)$ elements, we obtain:
	\begin{align}
		P\left( \max_{g' \in C_\nu} |\hat{R}_{lin}^{\boldsymbol{a}}(g') - R(g')| > \epsilon \right) 
		&= P\left( \bigcup_{g' \in C_\nu} \left\{ |\hat{R}_{lin}^{\boldsymbol{a}}(g') - R(g')| > \epsilon \right\} \right) \nonumber \\
		&\leq \sum_{g' \in C_\nu} P\left( |\hat{R}_{lin}^{\boldsymbol{a}}(g') - R(g')| > \epsilon \right) \nonumber \\
		&\leq 2\mathcal{N}(\mathcal{G}, \nu, \|\cdot\|_\infty) \exp\left( \frac{-\epsilon^2}{2 \sigma_{\max}^2(\boldsymbol{a}, \nu) + \frac{2}{3} \max_{m\in[k]}\left(\frac{\theta_m}{n_m}\right) B_{\boldsymbol{a}}\epsilon} \right).
	\end{align}
	
	
	Equating the right-hand side of the union bound to $\delta$, we find the bound $\epsilon$ that holds with probability at least $1-\delta$:$$\frac{\epsilon^2}{2 \sigma_{\max}^2(\boldsymbol{a}, \nu) + \frac{2}{3} \max_{m\in[k]}\left(\frac{\theta_m}{n_m}\right) B_{\boldsymbol{a}} \epsilon} = \ln\left(\frac{2\mathcal{N}(\mathcal{G}, \nu, \|\cdot\|_\infty)}{\delta}\right).$$
	
	Rearranging terms yields a quadratic equation for $\epsilon$:$$\epsilon^2 - \underbrace{\left( \frac{2}{3} \max_{m\in[k]}\left(\frac{\theta_m}{n_m}\right) B_{\boldsymbol{a}} \ln\left(\frac{2\mathcal{N}(\mathcal{G}, \nu, \|\cdot\|_\infty)}{\delta}\right) \right)}_{b} \epsilon - \underbrace{\left( 2 \sigma_{\max}^2(\boldsymbol{a}, \nu) \ln\left(\frac{2\mathcal{N}(\mathcal{G}, \nu, \|\cdot\|_\infty)}{\delta}\right) \right)}_{c} = 0.$$
	
	Using the quadratic formula, the positive root is $\epsilon = \frac{b + \sqrt{b^2 + 4c}}{2}$. By the subadditivity of the square root function ($\sqrt{x+y} \leq \sqrt{x} + \sqrt{y}$ for $x, y \geq 0$), we can simplify this as: $$\epsilon \leq \frac{b + b + 2\sqrt{c}}{2} = b + \sqrt{c}.$$
	
	Substituting the definitions of $b$ and $c$ back into the inequality, we obtain that with probability at least $1-\delta$:$$\max_{g' \in C_\nu} |\hat{R}_{lin}^{\boldsymbol{a}}(g') - R(g')| \leq \sqrt{2 \sigma_{\max}^2(\boldsymbol{a}, \nu) \ln\left(\frac{2\mathcal{N}(\mathcal{G}, \nu, \|\cdot\|_\infty)}{\delta}\right)} + \frac{2}{3} B_{\boldsymbol{a}} \max_{m\in[k]}\left(\frac{\theta_m}{n_m}\right) \ln\left(\frac{2\mathcal{N}(\mathcal{G}, \nu, \|\cdot\|_\infty)}{\delta}\right).$$
	
	\textbf{3. Final PAC Bound Derivation}
	Combining this result with the decomposition from Step 1 and Eq. \eqref{eq:cover_bound}, we bound the total excess risk:
	\begin{align*}
		R(\hat{g}) - R(g^*) &\leq 4L_{\boldsymbol{a}}\nu + 2 \max_{g' \in C_\nu} |\hat{R}_{lin}^{\boldsymbol{a}}(g') - R(g')| \\
		&\leq 4L_{\boldsymbol{a}}\nu + \sqrt{8 \sigma_{\max}^2(\boldsymbol{a}, \nu) \ln\left(\frac{2\mathcal{N}(\mathcal{G}, \nu, \|\cdot\|_\infty)}{\delta}\right)} + \frac{4}{3} B_{\boldsymbol{a}} \max_{m\in[k]}\left(\frac{\theta_m}{n_m}\right) \ln\left(\frac{2\mathcal{N}(\mathcal{G}, \nu, \|\cdot\|_\infty)}{\delta}\right) \\
	\end{align*}
	This completes the proof.
\end{proof}

\section{Variance comparison results for multiclass setting}

We present the variance comparison results for the multiclass setting. The procedure is analogous to the binary case described in Section~\ref{subsec:varcomp}. Since PNU learning is limited to binary classification, we compare only the supervised risk estimator $\hat{R}(g)$ and the proposed linear risk estimator $\hat{R}_{lin}^{\boldsymbol{a}^*}(g)$.

\paragraph{Datasets.} We use two multiclass datasets from the UCI Machine Learning Repository: Covertype and Dry Beans. For each dataset, we use 20\% of the data for training and 80\% as a held-out pool for evaluation.

\paragraph{Model and training.}
We train a two-layer neural network with hidden dimension 256 by minimizing the cross-entropy loss with $n_i = 10$ labeled samples per class via the Adam optimizer (learning rate $10^{-3}$, weight decay $10^{-4}$), with early stopping on validation accuracy.

\paragraph{Evaluation loss.}
We use the cross-entropy (CE) loss $l(g(x), j) = -\log \mathrm{softmax}(g(x))_j$ for the variance comparison.

\paragraph{Evaluation protocol.} With the trained classifier $g$ fixed, we evaluate the variance of each risk estimator by repeatedly sampling from the held-out data. The labeled sample sizes for risk estimation are fixed at $n_i = 5$ for all $i \in [k]$, and we vary $n_U \in \{100, 200, 500, 1000, 2000\}$. We consider two class prior settings: (1)~\emph{uniform}, where $\boldsymbol{\theta} = (1/7, \dots, 1/7)$, and (2)~\emph{skewed}, where $\boldsymbol{\theta}=[57/140, 34/140, 21/140, 13/140, 7/140, 5/140, 3/140] \approx (0.41,\; 0.24,\; 0.15,\; 0.09,\; 0.05,\; 0.04,\; 0.02)$. The optimal parameter $\boldsymbol{a}^*$ is computed assuming $n_U \to \infty$ using the covariance matrices estimated from the all evaluation data. For each condition, we conduct $5{,}000$ independent trials to estimate the variance.



\begin{figure}[htbp]
	\centering
	\begin{subfigure}[b]{0.48\columnwidth}
		\centering
		\includegraphics[width=\textwidth]{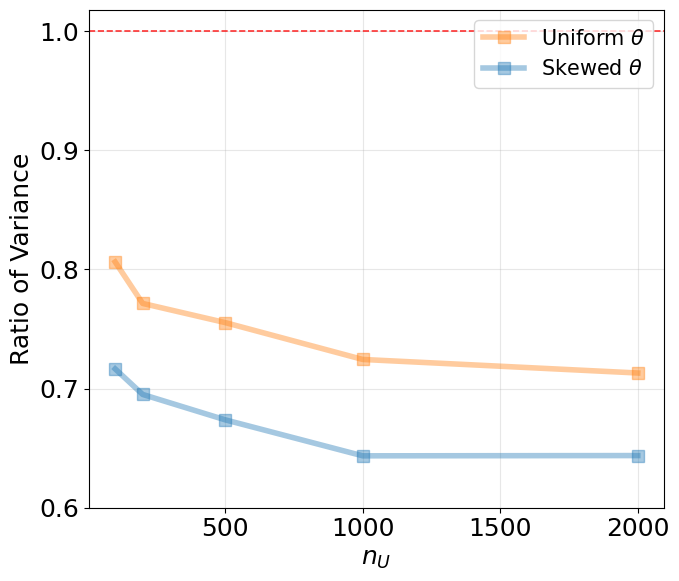}
		\vspace{-1.8em}
		\caption{Covertype (CE loss)}
		\label{fig:cov_ce}
	\end{subfigure}
	\hfill
	\begin{subfigure}[b]{0.48\columnwidth}
		\centering
		\includegraphics[width=\textwidth]{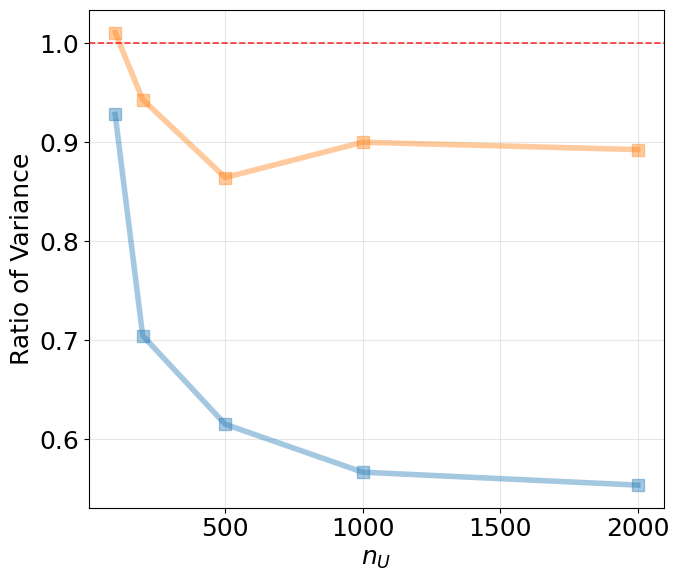}
		\vspace{-1.8em}
		\caption{Dry Beans (CE loss)}
		\label{fig:drybean_ce}
	\end{subfigure}
	
	\caption{Variance ratio ($\hat{R}_{lin}^{\boldsymbol{a}^*}$ / $\hat R$) as a function of the unlabeled sample size $n_U$ for uniform and skewed class priors ($k=7$). The red dashed line indicates the supervised baseline ($\mathrm{ratio} = 1$).} 
	\label{fig:var_compare_mult}
\end{figure}

As the number of unlabeled samples $n_U$ increases, the variance ratio decreases. The proposed method achieves greater variance reduction under the skewed class prior, consistent with Theorem~\ref{thm:varred}.

\section{SSL Experimental details}

We provide full details omitted from the main text for the SSL experiments in Section \ref{subsec:sslexp}.

\paragraph{Validation set.}
For all settings, the validation set is independently sampled from the original dataset.

\paragraph{Labeled data configurations for multiclass tasks.}
The per-class labeled sample counts for each regime are as follows.

\emph{7-class tasks} (Covertype, Shuttle, Dry Beans):
\begin{itemize}[nosep]
	\item Balanced: 10, 20, or 50 per class (totals: 70, 140, 350).
	\item Mild imbalance: $[33, 26, 23, 19, 16, 13, 10]$ (total 140) and $[81, 65, 57, 49, 41, 33, 24]$ (total 350).
	\item Severe imbalance: $[57, 34, 21, 13, 7, 5, 3]$ (total 140) and $[143, 86, 52, 32, 19, 11, 7]$ (total 350).
\end{itemize}

\emph{10-class tasks} (MNIST):
\begin{itemize}[nosep]
	\item Balanced: 20, 50, or 150 per class (totals: 200, 500, 1500).
	\item Mild imbalance: $[102, 82, 71, 61, 51, 41, 31, 26, 20, 15]$ (total 500) and $[306, 245, 214, 184, 153, 122, 92, 77, 61, 46]$ (total 1500).
	\item Severe imbalance: $[200, 120, 72, 44, 26, 16, 10, 6, 4, 2]$ (total 500) and $[600, 360, 216, 132, 78, 48, 30, 18, 12, 6]$ (total 1500).
\end{itemize}
For all multiclass tasks, $n_U = 5{,}000$.

\paragraph{Model architectures.}
\emph{MLP (tabular):}
We use an MLP with two hidden layers for all tabular datasets, using ReLU activations and a dropout rate of $0.2$. For the Adult, Banknote, Breast Cancer, and Credit Default datasets, we use hidden layer dimensions of $[256,256]$. For Covertype, we use $[512, 512]$, and for Shuttle and Dry Beans, we use $[512, 256]$.

\emph{CNN (image, LeNet-like):}
Here, let $\text{Conv}(c_{in}, c_{out}, k, \text{pad})$ denote a 2D convolutional layer with $c_{in}$ input channels, $c_{out}$ output channels, kernel size $k\times k$, and the specified padding. Let $\text{MaxPool}(k)$ denote a max pooling layer with kernel size $k\times k$. Let $\text{FC}(d_{in}, d_{out})$ denote a fully connected layer with input dimension $d_{in}$ and output dimension $d_{out}$. 
For MNIST, the architecture is $\text{Conv}(1, 32, 3, \text{pad}{=}1)$ $\to$ ReLU $\to$ $\text{MaxPool}(2)$ $\to$ $\text{Conv}(32, 64, 3, \text{pad}{=}1)$ $\to$ ReLU $\to$ $\text{MaxPool}(2)$ $\to$ $\text{FC}(64{\times}7{\times}7, 128)$ $\to$ ReLU $\to$ $\text{FC}(128, K)$.
For CIFAR-10, the architecture is $\text{Conv}(3, 64, 3, \text{pad}{=}1)$ $\to$ ReLU $\to$ $\text{MaxPool}(2)$ $\to$ $\text{Conv}(64, 128, 3, \text{pad}{=}1)$ $\to$ ReLU $\to$ $\text{MaxPool}(2)$ $\to$ $\text{FC}(128{\times}8{\times}8, 256)$ $\to$ ReLU $\to$ $\text{FC}(256, K)$.

\paragraph{Training protocol.}
All models are trained with the Adam optimizer ($\text{learning rate} =10^{-3}$, weight decay $10^{-4}$) for up to 200 epochs.
Batch sizes are 64 for labeled data and 256 for unlabeled data.
Early stopping is based on validation accuracy with patience 20 epochs and minimum improvement threshold $10^{-4}$.

\paragraph{Method-specific settings.}
\emph{PNU} (binary only): The mixing parameter $\eta$ is computed as $\eta = (\psi_2 - \psi_1)/(\psi_1 + \psi_2)$ where $\psi_m = \theta_m^2 / n_m$, following the equal variance assumption \citep{sakai17semi}. Non-negative risk correction ($\Delta_j^{nn}$) is applied.

\emph{Ours (Iter):} Training begins with a $20$-epoch warm-up period utilizing standard supervised risk prior to joint optimization. The covariance matrices are estimated on the validation split with diagonal shrinkage $\hat\Sigma_m \leftarrow (1-\alpha)\hat\Sigma_m + \alpha\,\mathrm{diag}(\hat\Sigma_m)$ ($\alpha=0.5$), and Tikhonov regularization $\lambda=10^{-4}$ is added when solving the linear system for the optimal $\boldsymbol{a}$. Non-negative risk correction ($\Delta_j^{nn}$) is applied.

\emph{Ours (EC):} The closed-form parameter $a_j = \theta_j (1 - w_j / W)$ with $w_j = \theta_j^2/n_j$ and $W = \sum_m w_m$ is used, requiring no validation data for parameter estimation.
The parameter $\boldsymbol{a}$ is fixed throughout training (no warmup). Non-negative risk correction ($\Delta_j^{nn}$) is applied.

\emph{Pseudo Label (PL):} The loss is the sum of a standard supervised cross-entropy term and an unsupervised CE term computed from pseudo labels. Initially, a supervised classifier is trained, followed by generating predictions on unlabeled data to create pseudo labels for instances exceeding a specific confidence threshold. The classifier is then retrained using both the original and pseudo labels. The confidence threshold is selected from the set $\{0.8, 0.9, 0.95\}$ for tabular multiclass tasks and $\{0.9, 0.95, 0.99\}$ for image tasks. 

\emph{VAT:} Supervised cross-entropy plus KL-divergence-based local distributional smoothness regularization with the consistency regularization weight is $\lambda_U = 1.0$, the finite-difference step size for approximating the adversarial direction is $\xi = 10^{-6}$, and $1$ power-iteration step is used. Batch-normalization running statistics are frozen during the adversarial perturbation computation.
The perturbation radius $\epsilon$ is selected from $\{0.2, 0.5, 1.0, 2.0\}$ for tabular tasks and $\{0.5, 1.0, 2.0, 4.0\}$ for image tasks.

\section{Additional experimental results}
This section provides the experimental details and ablation studies that were omitted from the main text because of space constraints. Unless otherwise stated, all table entries report mean$\pm$std over 30 random seeds. The section is organized as follows. Section~\ref{subsec:app-fixmatch} reports results for FixMatch, a recent semi-supervised learning baseline. Section~\ref{subsec:app-cov-size} studies how the number of examples used for covariance estimation affects our method. Section~\ref{subsec:app-homoscedasticity} reports diagnostics for the equal-covariance approximation used by the proposed estimators. Finally, Section~\ref{subsec:app-nn-ablation} studies the effect of the non-negative risk correction.

\subsection{Results for a recent SSL baseline}
\label{subsec:app-fixmatch}
We report additional results for FixMatch \citep{kihy20fixmatch}, a representative recent SSL method based on pseudo-labeling and consistency regularization. For each unlabeled example, FixMatch first predicts a pseudo-label from a weakly augmented view. It then applies an unsupervised cross-entropy loss to a strongly augmented view of the same example, but only when the maximum predicted class probability exceeds a confidence threshold.
Because this threshold can substantially affect performance, we sweep the threshold over $\{0.8,0.9,0.95\}$. For each task and labeled-data configuration, we select the threshold that gives the highest mean best-validation accuracy. The selected threshold, the corresponding test accuracy, and the best-validation accuracy are reported in Table~\ref{tab:app-fixmatch-best}. The unlabeled-loss weight is fixed at $\lambda_U=1.0$. 

For tabular datasets, the weak view is generated by adding Gaussian noise with $\sigma=0.1$. The strong view uses Gaussian noise with $\sigma=0.2$ and feature dropout with drop probability $p=0.1$. For MNIST-family image tasks, including the binary 4-vs-9 and multiclass MNIST settings, the weak view uses a small random affine transform, $\mathrm{degrees}=5$ and translation $(0.05,0.05)$, followed by normalization. The strong view uses a larger random affine transform, $\mathrm{degrees}=15$, translation $(0.1,0.1)$, and scale range $(0.9,1.1)$, followed by random erasing with probability $0.5$ and erased-area scale $(0.02,0.1)$, then normalization. For CIFAR-style image tasks, the weak view uses random horizontal flip and random crop with padding 4; the strong view additionally applies color jitter with brightness/contrast/saturation $0.4$ and hue $0.1$, random grayscale with probability $0.1$, and random erasing with probability $0.5$.

\begin{table}[h]
	\centering
	\footnotesize
	\caption{Classification results for FixMatch. Results are test accuracy and best-validation accuracy in \%, reported as mean$\pm$std over 30 seeds. For binary tasks, Bal. and Imb. denote balanced and imbalanced labeled data setting used in the section \ref{subsec:sslexp}. For MNIST, Bal., Mild, and Sev. follow the regime notation used in the main text.}
	\label{tab:app-fixmatch-best}
	{\setlength{\tabcolsep}{2pt}
		\begin{tabular}{@{}p{0.14\textwidth}p{0.18\textwidth}p{0.08\textwidth}p{0.15\textwidth}p{0.15\textwidth}@{}}
			\toprule
			Task & Regime ($n$) & Threshold & Test acc. (\%) & Best val. acc. (\%) \\
			\midrule
				Adult & Bal.\ (60) & 0.95 & 77.96 $\pm$ 2.11 & 78.61 $\pm$ 4.04 \\
			Adult & Bal.\ (200) & 0.9 & 78.74 $\pm$ 2.03 & 78.61 $\pm$ 2.97 \\	
				Adult & Imb.\ (60) & 0.95 & 78.97 $\pm$ 2.45 & 78.67 $\pm$ 3.70 \\
			Adult & Imb.\ (200) & 0.9 & 79.49 $\pm$ 2.42 & 79.83 $\pm$ 3.20 \\
			Banknote & Bal.\ (60) & 0.9 & 94.88 $\pm$ 3.20 & 95.55 $\pm$ 3.43 \\
			Banknote & Bal.\ (200) & 0.9 & 97.12 $\pm$ 1.95 & 97.61 $\pm$ 2.26 \\
			Banknote & Imb.\ (60) & 0.95 & 93.52 $\pm$ 4.84 & 94.72 $\pm$ 3.16 \\
			Banknote & Imb.\ (200) & 0.95 & 96.46 $\pm$ 2.30 & 97.22 $\pm$ 2.07 \\
					MNIST~4v9 & Bal.\ (60) & 0.8 & 98.83 $\pm$ 0.57 & 99.68 $\pm$ 0.48 \\
MNIST~4v9 & Bal.\ (200) & 0.95 & 98.99 $\pm$ 0.37 & 99.68 $\pm$ 0.40 \\
			MNIST~4v9 & Imb.\ (60) & 0.8 & 98.89 $\pm$ 0.55 & 99.63 $\pm$ 0.47 \\
			MNIST~4v9 & Imb.\ (200) & 0.8 & 99.03 $\pm$ 0.32 & 99.67 $\pm$ 0.48 \\
			Breast Cancer & Bal.\ (60) & 0.8 & 93.22 $\pm$ 2.70 & 96.33 $\pm$ 2.37 \\
			Breast Cancer & Bal.\ (200) & 0.9 & 95.09 $\pm$ 1.99 & 98.33 $\pm$ 1.91 \\
				Breast Cancer & Imb.\ (60) & 0.9 & 93.39 $\pm$ 2.69 & 97.00 $\pm$ 2.64 \\
			Breast Cancer & Imb.\ (200) & 0.95 & 95.00 $\pm$ 2.09 & 98.89 $\pm$ 2.02 \\
			Credit & Bal.\ (60) & 0.95 & 66.97 $\pm$ 7.04 & 68.17 $\pm$ 8.91 \\
				Credit & Bal.\ (200) & 0.95 & 70.18 $\pm$ 7.01 & 71.72 $\pm$ 8.61 \\
			Credit & Imb.\ (60) & 0.95 & 72.89 $\pm$ 4.74 & 76.44 $\pm$ 4.63 \\
			Credit & Imb.\ (200) & 0.8 & 72.22 $\pm$ 7.02 & 73.72 $\pm$ 8.22 \\
			MNIST & Bal.\ (200) & 0.8 & 97.12 $\pm$ 0.54 & 98.00 $\pm$ 0.75 \\
			MNIST & Bal.\ (500) & 0.95 & 96.82 $\pm$ 0.76 & 97.80 $\pm$ 0.98 \\
			MNIST & Bal.\ (1500) & 0.8 & 97.65 $\pm$ 0.55 & 97.95 $\pm$ 1.07 \\
			MNIST & Mild (500) & 0.9 & 96.95 $\pm$ 1.24 & 97.95 $\pm$ 0.80 \\
			MNIST & Mild (1500) & 0.8 & 97.60 $\pm$ 0.53 & 98.10 $\pm$ 0.81 \\
			MNIST & Sev.\ (500) & 0.95 & 94.71 $\pm$ 3.97 & 95.35 $\pm$ 4.47 \\
			MNIST & Sev.\ (1500) & 0.9 & 97.39 $\pm$ 0.50 & 98.00 $\pm$ 0.94 \\
			\bottomrule
		\end{tabular}
	}
\end{table}

\subsection{Covariance-set size effect}
\label{subsec:app-cov-size}
Our method uses a small held-out set to estimate covariance-related quantities. This ablation examines how sensitive \textbf{Ours (Iter)} is to the number of held-out covariance-estimation examples per class. We run the experiment on MNIST under two labeled-data regimes: a balanced regime and a severe-imbalance regime.
Table~\ref{tab:app-cov-size} shows that performance is fairly stable once a modest number of covariance-estimation examples is available. In particular, performance does not collapse even with very small covariance sets, likely helped by diagonal shrinkage and Tikhonov regularization, and largely saturates around 20--50 examples per class.

\begin{table}[h]
	\centering
	\footnotesize
	\caption{MNIST covariance-set size ablation using \textbf{Ours (Iter)}. Results are test accuracy in \%, reported as mean$\pm$std over 30 seeds.}
	\label{tab:app-cov-size}
	{\setlength{\tabcolsep}{4pt}
		\begin{tabular}{@{}lll@{}}
			\toprule
			Regime & Cov. examples per class & Test acc. (\%) \\
			\midrule
			Bal. (500) & 1 & 90.96 $\pm$ 1.08 \\
			Bal. (500) & 2 & 91.11 $\pm$ 2.11 \\
			Bal. (500) & 3 & 91.58 $\pm$ 1.43 \\
			Bal. (500) & 5 & 91.54 $\pm$ 1.36 \\
			Bal. (500) & 10 & 91.95 $\pm$ 0.92 \\
			Bal. (500) & 20 & 92.18 $\pm$ 1.08 \\
			Bal. (500) & 50 & 92.14 $\pm$ 0.70 \\
			Bal. (500) & 100 & 92.28 $\pm$ 0.84 \\
			Bal. (500) & 1000 & 92.55 $\pm$ 0.67 \\
			Sev. (500) & 1 & 80.90 $\pm$ 2.23 \\
			Sev. (500) & 2 & 78.57 $\pm$ 4.85 \\
			Sev. (500) & 3 & 79.57 $\pm$ 3.23 \\
			Sev. (500) & 5 & 80.88 $\pm$ 3.14 \\
			Sev. (500) & 10 & 81.17 $\pm$ 2.76 \\
			Sev. (500) & 20 & 81.75 $\pm$ 2.26 \\
			Sev. (500) & 50 & 82.56 $\pm$ 2.40 \\
			Sev. (500) & 100 & 81.40 $\pm$ 2.06 \\
			Sev. (500) & 1000 & 82.23 $\pm$ 2.61 \\
			\bottomrule
		\end{tabular}
	}
\end{table}

\subsection{Equal-covariance diagnostic}
\label{subsec:app-homoscedasticity}
The proposed data-free method relies on an equal-covariance approximation across classes. This diagnostic quantifies how closely the class-conditional covariance matrices agree with one another. In Table~\ref{tab:app-homoscedasticity}, Cov.\ rel.\ Fro.\ denotes the average pairwise relative Frobenius distance between class covariance matrices:
$$
\frac{\lVert\Sigma_i-\Sigma_j\rVert_F}{(\lVert\Sigma_i\rVert_F+\lVert\Sigma_j\rVert_F)/2}.
$$
Smaller values of this measure indicate that the class-conditional covariance matrices are closer to the equal-covariance approximation. These distances are not exactly zero; hence, equal covariance (EC) should be viewed as a data-free approximation, analogous to the equal-variance simplification used for binary PNU. Nevertheless, our experimental results do not show a clear correlation between the degree of EC violation and performance. One possible reason is that this diagnostic measures covariance discrepancies at fixed trained models, whereas Theorem \ref{thm:gen_bound} suggests the performance is affected by the maximum variance over the hypothesis class. This distinction may explain why this local measure of EC violation is not directly predictive of empirical performance.

\begin{table}[h]
	\centering
	\footnotesize
	\caption{Equal-covariance diagnostic summary. Results are mean$\pm$std over 30 seeds. Smaller Cov.\ rel.\ Fro.\ values indicate closer agreement with the equal-covariance approximation.}
	\label{tab:app-homoscedasticity}
	{\setlength{\tabcolsep}{2pt}
\begin{tabular}{@{}p{0.24\textwidth}p{0.07\textwidth}p{0.14\textwidth}p{0.16\textwidth}@{}}
	\toprule
	Setting & Classes & Test acc. (\%) & Cov. rel. Fro. \\
	\midrule
	Covertype, Bal. (140) & 7 & 56.58 $\pm$ 2.28 & 1.5110 $\pm$ 0.0864 \\
	Covertype, Mild (350) & 7 & 63.71 $\pm$ 1.51 & 1.5344 $\pm$ 0.0847 \\
	Covertype, Sev. (350) & 7 & 64.26 $\pm$ 1.71 & 1.5386 $\pm$ 0.0794 \\
	Dry Beans, Bal. (350) & 7 & 91.56 $\pm$ 0.71 & 1.1433 $\pm$ 0.0275 \\
	Dry Beans, Sev. (140) & 7 & 85.28 $\pm$ 1.94 & 1.1341 $\pm$ 0.1139 \\
	MNIST, Mild (500) & 10 & 89.74 $\pm$ 0.99 & 1.1260 $\pm$ 0.0847 \\
	MNIST, Sev. (500) & 10 & 80.94 $\pm$ 2.98 & 1.0817 $\pm$ 0.0701 \\
	Shuttle, Bal. (70) & 7 & 96.74 $\pm$ 1.57 & 1.3476 $\pm$ 0.0407 \\
	Shuttle, Mild (140) & 7 & 97.88 $\pm$ 0.82 & 1.3545 $\pm$ 0.0314 \\
	\bottomrule
\end{tabular}
	}
\end{table}

\subsection{Non-negative risk-correction ablation}
\label{subsec:app-nn-ablation}

We study the effect of the non-negative risk correction introduced in
Section~\ref{subsec:practicalssl}. Tables~\ref{tab:app-nn-binary-tabular}--\ref{tab:app-nn-severe}
compare \textbf{Ours (Iter)} with and without this correction. In the tables, ``NN on'' is the default method with the non-negative risk
correction, and ``NN off'' is the same method without it. We also report
\[
\Delta = \text{accuracy(NN off)} - \text{accuracy(NN on)}
\]
in percentage points. A negative $\Delta$ means that the correction improves
accuracy, while a positive $\Delta$ means that removing the correction gives
higher accuracy. Overall, the correction has a small effect on many tabular datasets. However, it is highly effective on image datasets with larger accuracy improvements.

\begin{table}[h!]
	\centering
	\footnotesize
	\caption{Non-negative risk-correction ablation on binary tabular tasks. Results are test accuracy in \%, reported as mean$\pm$std over 30 seeds. }
	\label{tab:app-nn-binary-tabular}
	{\setlength{\tabcolsep}{3pt}
		\begin{tabular}{@{}p{0.20\textwidth}p{0.14\textwidth}p{0.20\textwidth}p{0.20\textwidth}p{0.10\textwidth}@{}}
			\toprule
			Task & Labeled & NN on (\%) & NN off (\%) & $\Delta$ (pp) \\
			\midrule
			Adult & $(15,45)$ & 80.3 $\pm$ 1.6 & 80.2 $\pm$ 1.6 & -0.1 \\
			Adult & $(30,30)$ & 80.8 $\pm$ 1.5 & 80.9 $\pm$ 1.1 & +0.1 \\
			Adult & $(50,150)$ & 82.2 $\pm$ 1.0 & 82.1 $\pm$ 0.9 & -0.1 \\
			Adult & $(100,100)$ & 82.4 $\pm$ 1.0 & 82.5 $\pm$ 1.0 & +0.1 \\
			Banknote & $(15,45)$ & 97.3 $\pm$ 1.5 & 96.9 $\pm$ 2.0 & -0.4 \\
			Banknote & $(30,30)$ & 97.8 $\pm$ 1.1 & 97.9 $\pm$ 1.4 & +0.1 \\
			Banknote & $(50,150)$ & 98.6 $\pm$ 0.9 & 98.6 $\pm$ 0.9 & 0.0 \\
			Banknote & $(100,100)$ & 98.8 $\pm$ 1.0 & 98.8 $\pm$ 1.0 & 0.0 \\
			Breast Cancer & $(15,45)$ & 94.5 $\pm$ 2.3 & 94.6 $\pm$ 2.4 & +0.2 \\
			Breast Cancer & $(30,30)$ & 94.4 $\pm$ 1.8 & 94.5 $\pm$ 2.0 & +0.2 \\
			Breast Cancer & $(50,150)$ & 95.5 $\pm$ 1.7 & 95.6 $\pm$ 1.7 & 0.0 \\
			Breast Cancer & $(100,100)$ & 95.6 $\pm$ 1.8 & 95.6 $\pm$ 1.7 & +0.1 \\
			Credit & $(15,45)$ & 78.2 $\pm$ 2.0 & 78.3 $\pm$ 1.8 & +0.1 \\
			Credit & $(30,30)$ & 78.9 $\pm$ 1.2 & 78.9 $\pm$ 1.2 & 0.0 \\
			Credit & $(50,150)$ & 78.9 $\pm$ 1.3 & 79.0 $\pm$ 1.2 & 0.0 \\
			Credit & $(100,100)$ & 79.3 $\pm$ 1.1 & 79.3 $\pm$ 1.1 & 0.0 \\
			\bottomrule
		\end{tabular}
	}
\end{table}
\begin{table}[h!]
	\centering
	\footnotesize
	\caption{Non-negative risk-correction ablation on binary image tasks. Results are test accuracy in \%, reported as mean$\pm$std over 30 seeds.}
	\label{tab:app-nn-binary-image}
	{\setlength{\tabcolsep}{3pt}
		\begin{tabular}{@{}p{0.20\textwidth}p{0.14\textwidth}p{0.20\textwidth}p{0.20\textwidth}p{0.10\textwidth}@{}}
			\toprule
			Task & Labeled & NN on (\%) & NN off (\%) & $\Delta$ (pp) \\
			\midrule
			MNIST~4v9 & $(15,45)$ & 94.7 $\pm$ 1.9 & 92.3 $\pm$ 3.4 & -2.5 \\
			MNIST~4v9 & $(30,30)$ & 95.6 $\pm$ 1.6 & 94.6 $\pm$ 3.1 & -1.0 \\
			MNIST~4v9 & $(50,150)$ & 97.3 $\pm$ 0.9 & 96.9 $\pm$ 1.1 & -0.4 \\
			MNIST~4v9 & $(100,100)$ & 97.7 $\pm$ 0.7 & 97.4 $\pm$ 1.1 & -0.3 \\
			CIFAR~CvD & $(15,45)$ & 55.4 $\pm$ 1.6 & 55.8 $\pm$ 2.1 & +0.4 \\
			CIFAR~CvD & $(30,30)$ & 55.8 $\pm$ 2.0 & 55.2 $\pm$ 2.1 & -0.6 \\
			CIFAR~CvD & $(50,150)$ & 58.2 $\pm$ 2.1 & 58.7 $\pm$ 1.6 & +0.5 \\
			CIFAR~CvD & $(100,100)$ & 59.6 $\pm$ 1.7 & 58.7 $\pm$ 1.7 & -0.9 \\
			\bottomrule
		\end{tabular}
	}
\end{table}
\begin{table}[h]
	\centering
	\footnotesize
	\caption{Non-negative risk-correction ablation on multiclass tasks with equal labeled samples per class. Results are test accuracy in \%, reported as mean$\pm$std over 30 seeds.}
	\label{tab:app-nn-equal}
	{\setlength{\tabcolsep}{3pt}
		\begin{tabular}{@{}p{0.20\textwidth}p{0.14\textwidth}p{0.20\textwidth}p{0.20\textwidth}p{0.10\textwidth}@{}}
			\toprule
			Task & Labeled & NN on (\%) & NN off (\%) & $\Delta$ (pp) \\
			\midrule
			MNIST & 20 & 86.9 $\pm$ 1.3 & 85.6 $\pm$ 1.8 & -1.3 \\
			MNIST & 50 & 91.8 $\pm$ 0.5 & 91.4 $\pm$ 1.3 & -0.4 \\
			MNIST & 150 & 95.4 $\pm$ 0.4 & 95.3 $\pm$ 0.6 & -0.1 \\
			CIFAR-10 & 20 & 31.7 $\pm$ 1.4 & 28.9 $\pm$ 2.1 & -2.9 \\
			CIFAR-10 & 50 & 38.4 $\pm$ 1.2 & 35.0 $\pm$ 2.0 & -3.3 \\
			CIFAR-10 & 150 & 47.1 $\pm$ 1.1 & 44.3 $\pm$ 1.2 & -2.7 \\
			Covertype & 10 & 52.0 $\pm$ 2.9 & 53.1 $\pm$ 3.1 & +1.2 \\
			Covertype & 20 & 55.8 $\pm$ 1.3 & 56.3 $\pm$ 2.8 & +0.6 \\
			Covertype & 50 & 61.2 $\pm$ 2.1 & 61.4 $\pm$ 1.9 & +0.2 \\
			Shuttle & 10 & 96.7 $\pm$ 1.2 & 96.6 $\pm$ 1.5 & -0.1 \\
			Shuttle & 20 & 97.3 $\pm$ 1.0 & 97.5 $\pm$ 0.9 & +0.2 \\
			Shuttle & 50 & 97.3 $\pm$ 1.0 & 97.8 $\pm$ 0.9 & +0.5 \\
			Dry Beans & 10 & 87.6 $\pm$ 1.6 & 87.5 $\pm$ 1.7 & -0.1 \\
			Dry Beans & 20 & 89.9 $\pm$ 0.9 & 89.6 $\pm$ 1.1 & -0.3 \\
			Dry Beans & 50 & 91.3 $\pm$ 0.6 & 91.3 $\pm$ 0.6 & 0.0 \\
			\bottomrule
		\end{tabular}
	}
\end{table}
\begin{table}[h]
	\centering
	\footnotesize
	\caption{Non-negative risk-correction ablation on multiclass tasks with mild imbalance. Results are test accuracy in \%, reported as mean$\pm$std over 30 seeds. }
	\label{tab:app-nn-mild}
	{\setlength{\tabcolsep}{3pt}
		\begin{tabular}{@{}p{0.20\textwidth}p{0.14\textwidth}p{0.20\textwidth}p{0.20\textwidth}p{0.10\textwidth}@{}}
			\toprule
			Task & Labeled & NN on (\%) & NN off (\%) & $\Delta$ (pp) \\
			\midrule
			MNIST & 500 & 89.8 $\pm$ 1.4 & 89.3 $\pm$ 1.8 & -0.5 \\
			MNIST & 1500 & 94.3 $\pm$ 0.8 & 94.0 $\pm$ 0.9 & -0.3 \\
			CIFAR-10 & 500 & 34.7 $\pm$ 1.0 & 33.2 $\pm$ 1.9 & -1.5 \\
			CIFAR-10 & 1500 & 42.6 $\pm$ 1.0 & 42.2 $\pm$ 2.1 & -0.4 \\
			Covertype & 140 & 57.4 $\pm$ 2.1 & 58.5 $\pm$ 2.4 & +1.0 \\
			Covertype & 350 & 62.7 $\pm$ 1.4 & 62.5 $\pm$ 1.1 & -0.2 \\
			Shuttle & 140 & 96.9 $\pm$ 1.0 & 97.3 $\pm$ 1.5 & +0.4 \\
			Shuttle & 350 & 97.6 $\pm$ 1.1 & 97.7 $\pm$ 1.0 & +0.1 \\
			Dry Beans & 140 & 89.1 $\pm$ 1.7 & 88.9 $\pm$ 1.8 & -0.2 \\
			Dry Beans & 350 & 90.7 $\pm$ 0.9 & 90.6 $\pm$ 1.0 & -0.1 \\
			\bottomrule
		\end{tabular}
	}
\end{table}
\begin{table}[h]
	\centering
	\footnotesize
	\caption{Non-negative risk-correction ablation on multiclass tasks with severe imbalance. Results are test accuracy in \%, reported as mean$\pm$std over 30 seeds. }
	\label{tab:app-nn-severe}
	{\setlength{\tabcolsep}{3pt}
		\begin{tabular}{@{}p{0.20\textwidth}p{0.14\textwidth}p{0.20\textwidth}p{0.20\textwidth}p{0.10\textwidth}@{}}
			\toprule
			Task & Labeled & NN on (\%) & NN off (\%) & $\Delta$ (pp) \\
			\midrule
			MNIST & 500 & 81.1 $\pm$ 1.1 & 74.9 $\pm$ 3.1 & -6.2 \\
			MNIST & 1500 & 89.5 $\pm$ 1.7 & 86.9 $\pm$ 2.3 & -2.6 \\
			CIFAR-10 & 500 & 28.1 $\pm$ 1.3 & 25.6 $\pm$ 2.1 & -2.5 \\
			CIFAR-10 & 1500 & 34.1 $\pm$ 1.9 & 33.4 $\pm$ 1.8 & -0.7 \\
			Covertype & 140 & 58.0 $\pm$ 1.7 & 59.0 $\pm$ 2.2 & +0.9 \\
			Covertype & 350 & 63.3 $\pm$ 1.7 & 63.4 $\pm$ 1.6 & +0.1 \\
			Shuttle & 140 & 97.2 $\pm$ 0.9 & 97.3 $\pm$ 1.1 & +0.1 \\
			Shuttle & 350 & 97.6 $\pm$ 0.8 & 97.6 $\pm$ 0.8 & +0.1 \\
			Dry Beans & 140 & 84.9 $\pm$ 2.6 & 85.5 $\pm$ 2.4 & +0.6 \\
			Dry Beans & 350 & 89.1 $\pm$ 1.5 & 88.9 $\pm$ 1.5 & -0.1 \\
			\bottomrule
		\end{tabular}
	}
\end{table}